\definecolor{shadecolor}{rgb}{0.92,0.92,0.92}
\newcommand{\R}{\mathbb{R}}
\newcommand\restr[2]{{% we make the whole thing an ordinary symbol
		\left.\kern-\nulldelimiterspace % automatically resize the bar with \right
		#1 % the function
		\vphantom{\big|} % pretend it's a little taller at normal size
		\right|_{#2} % this is the delimiter
}}
\renewcommand{\tilde}{\widetilde}
\newtheorem{theorem}{Theorem}[section]
\newtheorem{lemma}[theorem]{Lemma}
\newtheorem{proposition}[theorem]{Proposition}
\theoremstyle{definition}
\newtheorem{definition}[theorem]{Definition}
\newtheorem{example}[theorem]{Example}
\newtheorem{remark}[theorem]{Remark}
\newcommand{\Ima}{\operatorname{Im}}
\newcommand{\Ker}{\operatorname{Ker}}
\newcommand{\lMod}[1]{{#1}\text{-}\mathbf{Mod}} % The category of left modules
\newcommand{\rMod}[1]{\mathbf{Mod}\text{-}{#1}} % The category of right modules
\newcommand{\op}{\mathrm{op}}
\newcommand{\Z}{\mathbb{Z}}
\newcommand{\barN}{\overline{N}}
\newcommand{\barC}{\overline{C}}
\newcommand{\rank}{\operatorname{rank}}
\newcommand{\Path}{\operatorname{Path}}
  \newcommand{\rarrow}[1]{\buildrel #1 \over \longrightarrow}
  \newcommand{\larrow}[1]{\buildrel #1 \over \longleftarrow}
\newcommand{\coequalizer}[5]{\xymatrix{
 {\displaystyle #1} 
 \ar@<1ex>[r]^-{#2} \ar@<-1ex>[r]_-{#3}
 & {\displaystyle #4} \ar[r] & {\displaystyle #5}
 }%
}
\newcommand{\equalizer}[5]{
\xymatrix{
 {\displaystyle #1} \ar[r] & {\displaystyle #2}
 \ar@<1ex>[r]^-{#3} \ar@<-1ex>[r]_-{#4}
 & {\displaystyle #5} 
 }%
}
\title{A Weighted Quiver Kernel using Functor Homology}
\author{Manohar Kaul and Dai Tamaki}
\begin{document}

\maketitle

\begin{abstract}
 In this paper, we propose a new homological method to study weighted
 directed networks. 
 Our model of such networks is a directed graph $Q$ equipped with a
 weight function $w$ on the set $Q_{1}$ of arrows in $Q$.
 We require that the range $W$ of our weight function is equipped with
 an addition or a multiplication, i.e.~$W$ is a monoid in the
 mathematical terminology.
 When $W$ is equipped with a representation on a vector space $M$, 
 the standard method of homological algebra allows us to define the
 homology groups $H_{*}(Q,w;M)$.

 It is known that when $Q$ has no oriented cycles,
 $H_{n}(Q,w;M)=0$ for $n\ge 2$ and $H_{1}(Q,w;M)$ can be easily computed.
 This fact allows us to define a new graph kernel for weighted directed
 graphs. We made two sample computations with real data and found that
 our method is practically applicable.
\end{abstract}

%\tableofcontents

\section{Introduction}
\label{introduction}

Graphs and quivers (directed graphs)\footnote{The rest of the paper uses the terms \emph{directed graph} and \emph{quiver} interchangeably.} are ubiquitous in mathematical
sciences.
In many applications, vertices or edges of graphs and
quivers are labeled and have \emph{costs} associated with them, also
called \emph{weights}.  
In this paper, we are interested in edge-weighted quivers. 
%The aim of this project is to extend these (co)homology theories to
%\emph{weighted quivers}, which are quivers whose edges are associated
%with a \emph{cost}/\emph{weight}. 
These weights are not restricted to 
just scalar values, but can also represent much more complex and richer
relations between the nodes of an edge by modeling them as \emph{label
sets} or a \emph{function of several variables}. 

%The applicability of weighted quivers covers a very wide and diverse
%range of applications from numerous fields, and is
%non-exhaustive. 
Such weighted quivers arise frequently when modeling real-world
applications, especially where the \emph{relationships} among objects
play an important role. Below are a few applications of weighted quivers
that cover wide and diverse fields: 
\begin{itemize}
 \item \textbf{Physics}: weighted quivers are used to represent atomic
       structures, where an atom is depicted as a vertex and the
       interactive forces between the atoms (i.e., vertices) are shown
       as directed edges between pairs of vertices. The edge weights
       here can model the \emph{strength of interaction} between two
       vertices. Note that such a weighted quiver also accepts
       \emph{multiple edges} between the same pair of vertices, where
       each edge potentially represents a different type of interactive
       force. 
	
 \item \textbf{Chemistry:} weighted quivers model molecular structures,
       where the vertices and the edges represent atoms and the chemical
       bonds between them, respectively. The edge weights contain
       information such as the bond angles, the magnitude of
       electrostatic force of attraction, polarity of the bonds, etc. 
	
 \item \textbf{Neuroscience:} weighted quivers can represent a
       functional model of the brain, where vertices represent regions
       of the brain and the edges represent the connections or
       communication pathways between them. The edge weights can
       represent similarity between two brain signals at the vertices,
       information propagated between the vertices via the edge, etc.  
	
 \item \textbf{World Wide Web (WWW):} weighted quivers represent the
       interconnections between documents on the web, where web
       documents are shown as vertices and edges represent the
       references between them. An edge weight in this instance could
       signify the number of times the source vertex referenced the
       target vertex, or how many web links they share in common etc. 	
\end{itemize}

We focus our attention to implementing a kernel method in the
study of such weighted networks.
Recall that, given a family of graphs $\mathcal{G}$, a
\emph{graph kernel} on $\mathcal{G}$ is a function
$k:\mathcal{G}\times\mathcal{G}\to\R$ defined by
\[
 k(G,G') = \langle \phi(G),\phi(G')\rangle
\]
for $G,G'\in\mathcal{G}$, where $\phi : \mathcal{G} \to \R^{N}$ is an
embedding, called a \emph{feature map} and $\langle-,-\rangle$ is the
standard inner product in $\R^{N}$.
The kernel method was introduced in the field
of machine learning 
\cite{Gaertner-Flach-Wrobel2003,Kashima-Tsuda-Inokuchi2003}.
Since then quite a few graph kernels have been proposed for graphs and
labelled graphs. 
Such graph kernels are proposed to answer two
often-encountered questions, in the context of graphs. Namely, ``How
similar are two nodes in a given graph?'' and  
``How similar are two graphs to each other?''. 
More details on graph kernels can be
obtained from the survey paper \cite{1903.11835}. 

The novelty of our method is the use of a homology theory for weighted
quivers in the construction of a feature map.
Given a quiver $Q$, a weight function $w:Q_{1}\to W$ and a
representation (action) of $W$ on a vector space $M$, we define homology
groups $H_{*}(Q,w;M)$, called the \emph{weighted quiver homology}. 
Although the dimension theory of small categories (e.g.~\S1.6 of
\cite{Husainov2002}) implies 
that $H_{n}(Q,w;M)=0$ for $n\ge 2$, the first homology $H_{1}(Q,w;M)$
contains essential information of the weighted quiver $(Q,w)$.
Furthermore we have an explicit description of $H_{1}(Q,w;M)$, giving us
a computable invariant. See Theorem \ref{first_homology} for a precise
statement.  

In order to construct a feature map, we order the vertex set
$Q_{0}=\{v_{1},\ldots,v_{N}\}$ and choose a positive integer $H$.
For each vertex $v_{i}$, we iterate $H$
times, each time computing a progressively larger acyclic sub-quiver and
the dimension of its first weighted quiver homology, denoted by
$h_{k}(v_{i})$ in the $k$-th iteration.
These numbers form a vector 
$\bm{h}(v_{i})=(h_{1}(v_{i}),h_{2}(v_{i}),\ldots,h_{H}(v_{i}))\in
\R^{H}$. 
The sequence $(\bm{h}(v_{1}),\bm{h}(v_{2}),\ldots,\bm{h}(v_{N}))\in
\underbrace{\R^{H}\times\cdots\times\R^{H}}_{N} = \R^{HN}$ is our
feature vector.

We remark that this approach is inspired by the neighborhood
aggregation approaches outlined in graph kernel literature in the area
of machine learning, especially the \emph{Weisfeiler-Lehman} (WL)
kernel~\cite{Shervashidze-Schweitzer-vanLeeuwen-Mehlhorn-Borgwardt2011}.
An overarching principle in the design of 
graph kernels is the representation and comparison of local structure in
graphs. Two vertices are considered \emph{similar} if their
neighborhoods are colored / labeled similarly. A natural extension to
this notion is that two graphs are considered similar if they are
composed of vertices with similar neighborhoods, i.e., they have a
similar local structure. 

In neighborhood aggregation schemes, each vertex in a graph is assigned
a color or attribute based on a summary of the local structure
surrounding the vertex. For each vertex, iteratively, the attributes /
colors are aggregated to compute a new attribute / color that eventually
represents the structure of its extended neighborhood in a compressed
and compact form.
% (very similar to a \emph{geometric signature} in
% algebraic topology).  
Shervashidze et
al.~\cite{Shervashidze-Schweitzer-vanLeeuwen-Mehlhorn-Borgwardt2011} 
introduced a highly influential class of neighborhood aggregation
kernels for graphs with discrete labels based on the 1-dimensional
Weisfeiler-Lehman (1-WL) or \emph{color refinement
algorithm}~\cite{Babai-Kucera1979}: a well-known heuristic for the
\emph{graph isomorphism problem}.
Our approach can be thought of as an implementation of the WL kernel for
weighted networks by using the weighted quiver homology. 

%For example, in \emph{protein function prediction}, a common task is to
%predict whether a given protein is an enzyme or not. In such cases
%protein function is inferred by finding proteins with similar sequence,
%structure, or chemical properties. A very recent successful method is to
%model the protein as a graph and compute graph kernels to measure the
%similarity between proteins and enzymes represented in  
%this fashion. 

%Details of our invariant signature based on our weighted
%quiver homology are outlined 
%in Section~\ref{sec:compute}.

%The applicability of weighted quivers covers a very wide and diverse
%range of applications from numerous fields, and is
%non-exhaustive. Instead, we will focus our attention to two specific
%example applications which have non-trivial edge weights. Namely,
%\emph{road networks} (Section~\ref{subsec:roads}) and \emph{social
%networks} (Section~\ref{subsec:social}) . 

We made two sample computations of our feature vectors on the following
examples. 

\begin{example}[Node Embeddings of Weighted Directed Graphs
 (Section~\ref{subsec:node})] 
 Machine learning (ML) methods favor \emph{continuous vector}
 representations, while graphs are inherently unordered, irregular, and
 combinatorial in nature.  
 A popular task in ML is to find \emph{graph embeddings} to represent a
 graph such that the embedding captures the graph's original shape,
 linkage structure, and other graph properties (e.g. cliques, cycles
 etc.). The more graph properties a graph embedding captures the better
 are the downstream tasks like classification of graphs, or predicting
 future link creation etc.  
 Roughly, there are two types of embeddings:
 \begin{enumerate}
  \item vertex/node embeddings where two vertices in a graph surrounded
	by similar local structures are also found close to one another
	in the vertex embeddings, and 
  \item graph embeddings where two graphs with similar properties
	cluster together and two graphs with dissimilar properties
	appear farther from each other in this vector space.
 \end{enumerate}
 We refer the reader to a survey on node embeddings~\cite{1808.02590} for
 more details.

 We computed the feature vectors of the \emph{Cora dataset}
 \cite{Sen-Namata-Bilgic-Getoor-Gallagher-EliassiRad2008}, which is a
 research citation network (directed) comprising of $2708$ scientific
 publications classified into one of \emph{seven categories}.
 In this experiment, nodes that represent a given topic cluster together
 and also move away from topics that are different. We see this
 separation improve as we vary the number of iterations $H$ from $4$ to $6$. 
 See Figure~\ref{fig:node_emb}.
\end{example}

\begin{example}[Community Detection in Weighted Graphs
 (Section~\ref{subsec:comm})] 
 One of the most relevant features of graphs representing real systems
 is \emph{community structure}, or \emph{clustering}, i.e., the
 organization of vertices in clusters, with many edges joining 
vertices of the same cluster and comparatively few edges joining
 vertices of different clusters.  
 Such communities can be considered as independent components of a
 graph, that play a very similar role, e.g., the tissues or the organs
 in the human body. Community detection finds applications in a wide and
 diverse set of areas such as biology, sociology, and computer science,
 to name a few, where systems are often represented as graphs. This
 problem is extremely hard and has not yet been solved satisfactorily,
 despite the huge effort of a large interdisciplinary community of
 scientists working on it over the past few years. This task gets even
 harder when having to identify such communities in weighted directed
 graphs. We refer the reader to a survey on community
 detection \cite{Fortunato2010} for more details. 

 For our experiment, we used the Facebook graph dataset from SNAP
 \cite{McAuley-Leskovec2012}.
 It can be visually observed from  Figure~\ref{fig:comm}(b) that
 our method does a fairly good job of detecting communities
 in the strong sense in the Facebook graph. 
\end{example}

%One of recent attempts is the path (co)homology introduced by Dimakis
%and M{\"u}ller-Hoissen
%\cite{hep-th/9401149,hep-th/9404112,hep-th/9412069} and by Grigor$'$yan,
%Lin, Muranov, and Yau \cite{1207.2834}, independently.
%The construction is recently generalized to quivers by
%Grigor$'$yan, Muranov, Vershinin, and Yau \cite{QuiverPathHomology}.
%In another direction, Bressan, Li, Ren, and Wu \cite{1610.00890}
%introduced a variation for hypergraphs.

The paper is organized as follows.
\begin{itemize}
 \item \S\ref{weighted_category} is preliminary. We collect notation and
       terminology used in this paper.
 \item Our feature map is defined in \S\ref{feature_vector}. After
       recalling the idea of the homology of small categories in
       \S\ref{small_category}, the weighted quiver homology is defined
       in \S\ref{homology_of_weighted_category}.
       The algorithm for computing the feature map is described in
       \S\ref{weighted_quiver_kernel}. 

 \item Applications to two practical examples are described in
       \S\ref{applications}. 

 \item An appendix is attached in which mathematical details 
       lying behind our weighted quiver homology are described.
\end{itemize}

%\subsection*{Acknowledgments}
%The second author is supported by JSPS KAKENHI Grant Number JP15K04870
%and JP20K03579.

\section{Weighted Quivers and Weighted Categories}
\label{weighted_category}

This section is preliminary.
Here we summarize notation and terminology for weighted directed graphs
and related structures used in this paper.

\subsection{Graphs, Quivers, and Small Categories}
\label{quiver}

A graph whose edges are directed is often called a \emph{directed graph}
or a \emph{digraph}, for short, in applied mathematics, where 
digraphs are often assumed to be simple, i.e.~there are at most one edge
between two vertices.
On the other hand, directed graphs are also used in pure
mathematics, such as representation theory, in which they are usually
called \emph{quivers} and are not assumed to be simple.
In this paper, we use the term quiver.

\begin{definition}
 \label{def:quiver}
 A \emph{quiver} $Q$ consists of two sets $Q_0$, the set of
 \emph{vertices}, and $Q_1$, the set of \emph{arrows}.
 When an arrow $u\in Q_{1}$ is directed from a vertex $x$ to another
 vertex $y$, we write $u: x \to y$. The vertices are also written as
 $s(u) = x$ and $t(u) = y$ so that we obtain the \emph{source} and the
 \emph{target} maps
 \[
  s,t : Q_{1} \rarrow{} Q_{0}.
 \]
 The set of arrows from $x$ to $y$,
 i.e.~$s^{-1}(x) \cap t^{-1}(y)$, is denoted by $Q(x, y)$.  

 A quiver $Q$ is called \emph{simple} if, there is at most one arrow
 between each pair of distinct vertices and there is no arrow of the
 form $x\to x$.
\end{definition}

\begin{remark}
 \label{remark:arrows_in_simple_quiver}
 When $Q$ is simple, the map
 \[
  s\times t : Q_{1} \rarrow{} Q_{0}\times Q_{0}
 \]
 is injective and the set of arrows $Q_{1}$ can be regarded as a subset
 of $Q_{0}\times Q_{0}$.
 In particular, an arrow $u:x\to y$ in $Q$ is represented by the pair of
 vertices $(x,y)$.
\end{remark}

\begin{remark}
 The sets of vertices and arrows of a quiver $Q$ are
 sometimes denoted by $V(Q)$ and $E(Q)$, respectively. When we consider
 generalizations to hypergraphs, however, our notation will be more
 convenient. 
\end{remark}

The notion of paths is essential in the study of quivers.

\begin{definition}
 \label{def:path}
 By a \emph{path} $\gamma$ on a quiver $Q$, we mean a finite sequence of
 composable arrows in $Q$, i.e.~$\gamma=(u_{n},u_{n-1},\ldots,u_{1})$
 such that $t(u_{i})=s(u_{i+1})$ for all $i=1,\ldots,n-1$. The number
 $n$ is called the \emph{length} of $\gamma$.
 The set of paths of length $n$ in $Q$ is denoted by $N_{n}(Q)$. By
 convention, $N_{0}(Q)=Q_{0}$.

 The obvious extensions of the source and the target maps are denoted by
 \[
  s,t : N_{n}(Q) \rarrow{} Q_{0},
 \]
 respectively.
\end{definition}

The observation in Remark \ref{remark:arrows_in_simple_quiver} can be
extended as follows.

\begin{remark}
 \label{remark:paths_in_simple_quiver}
 Let $x_{i}=t(u_{i})=s(u_{i+1})$ in a path
 $\gamma=(u_{n},\ldots,u_{1})$. Then $\gamma$ can be expressed as 
 \[
  x_{0} \rarrow{u_{1}} x_{1} \rarrow{u_2} \cdots \rarrow{u_{n}} x_{n}.
 \]
 Note the reversal of the ordering of arrows.
 When $Q$ is simple, this path can be represented by the sequence of
 vertices $(x_{0},\ldots,x_{n})$.
\end{remark}

By regarding paths as arrows, we obtain new quivers.

\begin{definition}
 \label{def:power_of_quiver}
 For a quiver $Q$, define a quiver $\Path(Q)$ as follows. The set of
 vertices is the same as that of $Q$; $\Path(Q)_{0}=Q_0$. Arrows in
 $\Path(Q)$ are paths in $Q$;
 \[
  \Path(Q)_{1} = \coprod_{n=1}^{\infty} N_{n}(Q).
 \]
 The source and target maps are defined in Definition
 \ref{def:path}. 
 This is called the \emph{path quiver of $Q$}.
%
% For a nonnegative integer $\ell$, the subquiver of $\Path(Q)$
% consisting of paths of length $\le\ell$ is denoted by
% $\Path_{\ell}(Q)$, 
% i.e.~$\Path_{\ell}(Q)_{0} = Q_{0}$ and
% $\Path_{\ell}(Q)_{1} = \coprod_{n=1}^{\ell} N_{n}(Q)$.
% This is called the \emph{$\ell$-th power of $Q$}.
\end{definition}

The quiver $\Path(Q)$ contains $Q$ as a subquiver. An important
difference is that we may compose arrows in $\Path(Q)$. This composition
operation makes $\Path(Q)$ very close to being a small category.

A \emph{small category} is a category whose objects form a set. In other
words, it consists of the set of objects $C_{0}$, the set of morphisms
$C_{1}$, and the composition law of morphisms. It is also required that
the identity morphism $1_{x}$ is assigned to each object $x\in C_{0}$.
A precise description is given as follows.

\begin{definition}
 \label{def:small_category}
 A \emph{small category} $C$ consists of the following data:
 \begin{itemize}
  \item a quiver $(C_{0},C_{1},s,t)$,
  \item an operation, called the \emph{composition}, which assigns
	an arrow $u_{2}\circ u_{1}$ to each composable pair of arrows
	$(u_{2},u_{1})\in N_{2}(C)$, and 
  \item an assignment of a distinguished arrow $1_{x}:x\to x$, called
	the \emph{identity at $x$}, to each element $x\in C_{0}$.
 \end{itemize}
 They are required to satisfy the following conditions:
 \begin{enumerate}
  \item The composition is associative;
	$(u_{3}\circ u_{2})\circ u_{1}=u_{3}\circ(u_{2}\circ u_{1})$ for
	each composable triple $(u_{3},u_{2},u_{1})\in N_{3}(C)$.
  \item When $u:x\to y$, $1_{y}\circ u=u=u\circ 1_{x}$.
 \end{enumerate} 
\end{definition}

\begin{remark}
 When $C$ is a small category, elements of $C_{0}$ and $C_{1}$ are
 called \emph{objects} and \emph{morphisms}, respectively. Elements of
 $N_{n}(C)$ are called \emph{$n$-chains} or \emph{chains of length $n$},
 instead of paths. 
% As we will see in Example \ref{example:nerve}, they form a simplicial
% set $N(C)$.
\end{remark}

By adding identity morphisms to the path quiver $\Path(Q)$, we obtain a
small category.

\begin{definition}
 For a quiver $Q$, the small category obtained by adding
 $N_{0}(Q)=Q_{0}$ to $\Path(Q)$ as identity morphisms is denoted by
 $F(Q)$. Thus
 \[
 F(Q)_{1} = \coprod_{n=0}^{\infty} N_{n}(Q).
 \]
 This is called the \emph{free category generated by $Q$}.
 It is also called the \emph{path category of $Q$}.
 The composition is given by the concatenation of paths.
\end{definition}

\subsection{Weight Functions on Quivers and Small Categories}
\label{weight_function}

In practical applications, graphs and quivers often have labels on
their vertices or arrows. Coloring vertices is one of central topics in
graph theory.
In this paper, we are interested in colorings of arrows. 
The following general definition is borrowed from a paper
\cite{1802.03546} by Kanda, in which the term \emph{color} is used
instead of weight.

\begin{definition}
 An \emph{arrow-weight}, or simply a \emph{weight}, of a quiver
 $Q$ with weights in a set $W$ is a map $w:Q_{1}\to W$.
 A \emph{weighted quiver} is a pair $\Gamma=(Q,w)$ of a quiver $Q$
 and its arrow-weight $w:Q_{1}\to W$.
\end{definition}

In order to introduce compositions of arrows in a weighted quiver, we 
need an amalgamation of weights. Such an operation should be
associative. In other words, $W$ should be a semigroup.

\begin{lemma}
 \label{canonical_extension}
 If the set of weights $W$ of a weighted quiver $(Q,w)$ has a structure of
 semigroup, the wegith $w$ has a canonical extension
 \[
  \tilde{w} : \Path(Q)_{1} \rarrow{} W
 \]
 given by
 \[
  \tilde{w}(u_{n},\ldots,u_{1}) = w(u_{n})\cdot w(u_{n-1}) \cdots w(u_{1}),
 \]
 where the multiplication in $W$ is denoted by $\cdot$.
 When $W$ is a monoid with unit $1$, it can be further extended to
 \[
  \tilde{w}: F(Q)_{1} \rarrow{} W
 \]
 by $\tilde{w}(x)=1$ for $x\in N_{0}(Q)=Q_{0}$.
\end{lemma}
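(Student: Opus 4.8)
The plan is to read ``canonical'' as meaning \emph{the unique extension that is multiplicative with respect to concatenation of paths and that restricts to $w$ on arrows}, and then to verify both existence (the displayed formula really defines a map) and uniqueness (no other multiplicative extension exists). The whole point of demanding that $W$ be a semigroup is that associativity makes the iterated product $w(u_{n})\cdot w(u_{n-1})\cdots w(u_{1})$ unambiguous, so that no choice of bracketing is involved and the right-hand side of the formula is a genuinely well-defined element of $W$.

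For existence I would argue as follows. Since $\Path(Q)_{1}=\coprod_{n\ge 1} N_{n}(Q)$ and each element is, by Definition~\ref{def:path}, literally a sequence $(u_{n},\ldots,u_{1})$ of composable arrows, one simply defines $\tilde{w}$ on each summand $N_{n}(Q)$ by the product formula; associativity of $\cdot$ guarantees this is independent of bracketing. For $n=1$ the formula reads $\tilde{w}(u_{1})=w(u_{1})$, so $\tilde{w}$ does restrict to $w$. The key compatibility is multiplicativity: given composable paths $\gamma=(u_{n},\ldots,u_{1})$ and $\delta=(u_{n+m},\ldots,u_{n+1})$ with $t(u_{n})=s(u_{n+1})$, their concatenation in $\Path(Q)$ is $\delta\circ\gamma=(u_{n+m},\ldots,u_{1})$, and associativity lets us bracket the product so that
\[
 \tilde{w}(\delta\circ\gamma)=\bigl(w(u_{n+m})\cdots w(u_{n+1})\bigr)\cdot\bigl(w(u_{n})\cdots w(u_{1})\bigr)=\tilde{w}(\delta)\cdot\tilde{w}(\gamma).
\]
Uniqueness then follows immediately: any multiplicative extension agreeing with $w$ on $Q_{1}$ must send the path $(u_{n},\ldots,u_{1})=u_{n}\circ\cdots\circ u_{1}$ to $w(u_{n})\cdots w(u_{1})$, which is exactly the formula.

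For the monoid refinement I would set $\tilde{w}(x)=1$ on each $x\in N_{0}(Q)=Q_{0}$, interpreted as the value on the identity morphism $1_{x}$ of $F(Q)$, and check that the unit law of the monoid makes this compatible with composition in $F(Q)$: for $u:x\to y$ one has $\tilde{w}(u\circ 1_{x})=\tilde{w}(u)\cdot 1=\tilde{w}(u)$ and $\tilde{w}(1_{y}\circ u)=1\cdot\tilde{w}(u)=\tilde{w}(u)$, matching $u\circ 1_{x}=u=1_{y}\circ u$. Together with the multiplicativity already established, this exhibits $\tilde{w}$ as a functor from $F(Q)$ to $W$ regarded as a one-object category. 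I do not expect any genuine obstacle here; the only thing worth stating carefully is the characterizing property that makes the extension \emph{canonical}, since once associativity is invoked every verification reduces to moving brackets in a product.
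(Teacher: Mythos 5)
Your proof is correct and matches the paper's treatment: the paper states this lemma without proof, treating it as an immediate consequence of associativity, and the multiplicativity property you verify is exactly the remark the paper makes right after the lemma statement. Your added uniqueness argument (characterizing $\tilde{w}$ as the unique multiplicative extension restricting to $w$) is a harmless refinement of the same direct verification, not a different route.
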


Note that the new weight function $\tilde{w}$ transforms compositions of
paths into multiplications (amalgamations) of weights;
\[
 \tilde{w}(\gamma\circ\delta) = \tilde{w}(\gamma)\tilde{w}(\delta).
\]
We require this property for weights of small categories.

\begin{definition}
 \label{def:weighted_small_category}
 A \emph{weight function} on a small category $C$ with weights in a
 monoid $M$ is a weight $w:C_{1}\to M$ such that
 \begin{enumerate}
  \item the weight function $w$ preserves units in the sense that
	$w(1_{x})=1$ for any object $x$, and
  \item the weight function $w$ is multiplicative in the sense that
	\[
	w(u\circ v) = w(u)w(v)
	\]
	for any composable pair $(u,v)$ of morphisms in $C$.
 \end{enumerate}

 The pair $(C,w)$ of a small category $C$ and a weight function $w$ is
 called a \emph{weighted small category}.
\end{definition}

\begin{example}
 \label{example:from_weighted_quiver_to_weighted_category}
 For a weighted quiver $(Q,w)$, the pair $(F(Q),\tilde{w})$ is a
 weighted small category.
\end{example}

By Lemma \ref{canonical_extension}, the power construction in Definition 
\ref{def:power_of_quiver} can be extended to weighted quivers. The weight 
function of the $\ell$-th power of a weighted quiver $\Gamma=(Q,w)$ is
denoted by
\[
 w : \Path_{\ell}(\Gamma)_{1} = \Path_{\ell}(Q)_{1} \rarrow{} W.  
\]

\begin{example}
 Figure~\ref{fig:qpower} shows an example of a weighted quiver $\Gamma$ 
 and its \emph{$2$nd power} $F^{2}(\Gamma)$.
 \begin{figure}[htbp]	
  \centering
  \includegraphics[width=.5\textwidth]{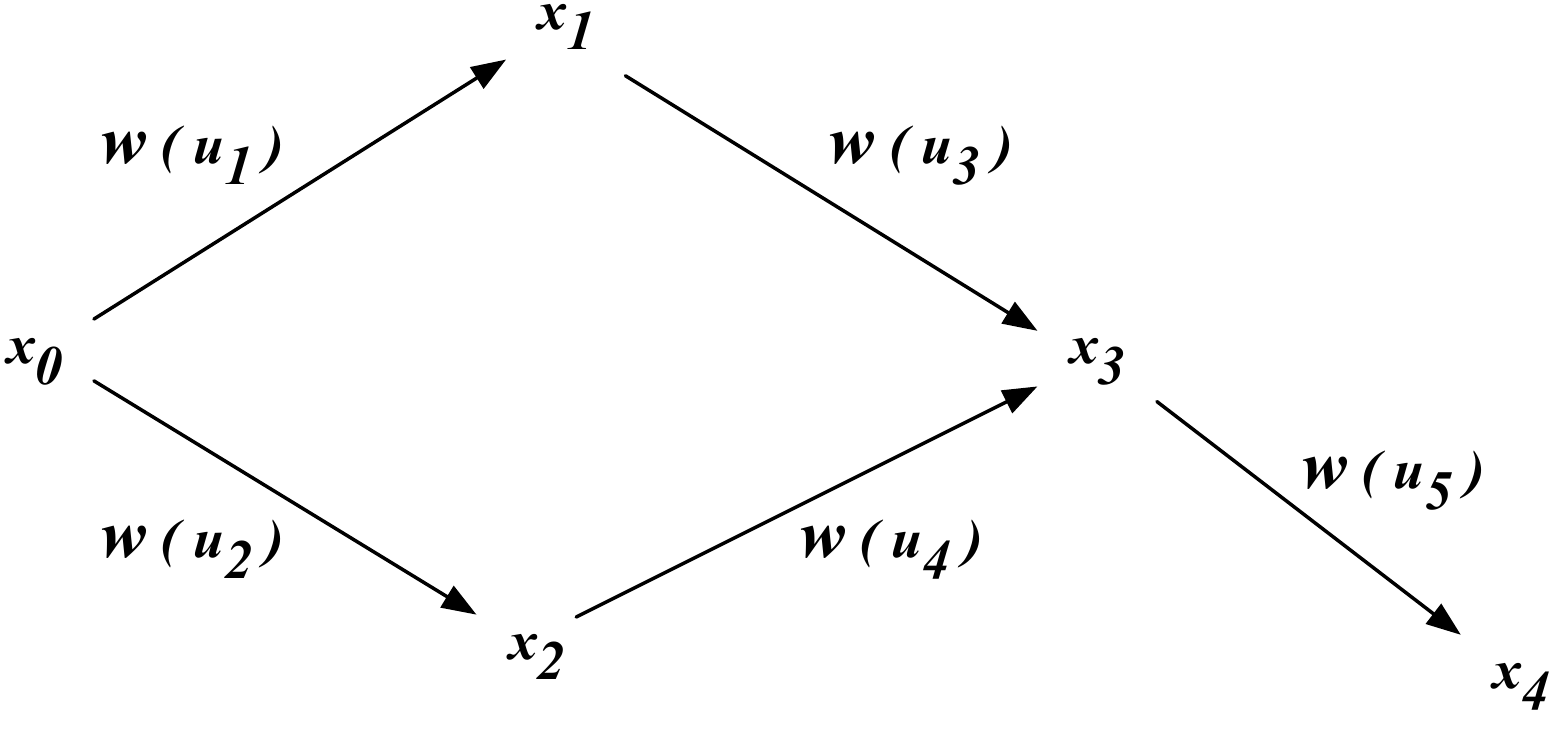}\hfill
  \includegraphics[width=.5\textwidth]{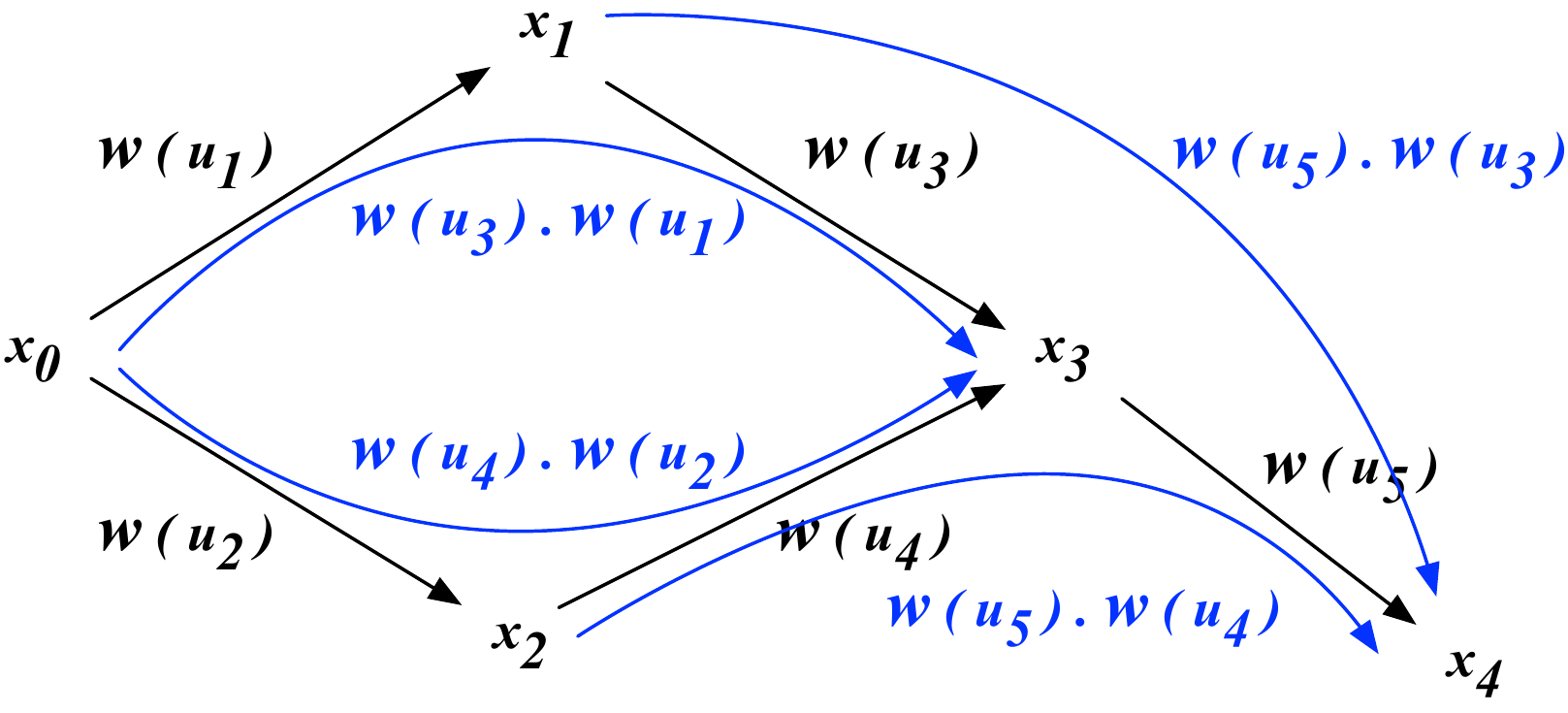}\hfill
  \caption{(left-to-right): Weighted quivers (a) $\Gamma$ and (b)
  $\Path_{2}(\Gamma)$.} 
  \label{fig:qpower}
 \end{figure} 
\end{example}

\section{A Feature Map using Weighted Quiver Homology}
\label{feature_vector}

In this section, we define a homology theory for weighted quivers, with
which a new ``weighted quiver kernel'' is defined.
Throughout this section, we fix a commutative ring $k$. When necessary,
we assume that $k$ is a field.
 
\subsection{Homology of Small Categories}
\label{small_category}

Let us first recall the definition of homology of small categories.
The definition can be regarded as a variant of the homology of a
simplicial complex. We first construct the nerve complex $N(C)$ from a
small category $C$. The nerve complex has a structure analogous to
simplicial complexes. Thus we may define its homology.

In order to understand the definition of homology of small categories,
let us first recall the definition of simplicial complexes and their
homology. 

\begin{definition}
 Let $K$ be a simplicial complex with vertex set $V$. 
 For each nonnegative integer $n$, the free Abelian group generated by
 the $n$-dimensional simplices of $K$ is denoted by $C_{n}(K;\Z)$. More
 generally, for a commutative ring $k$, we may form a free $k$-module
 instead of a free Abelian group to obtain $C_{n}(K;k)$.

 In order to make the collection $C_{*}(K;k)=\{C_{n}(K;k)\}_{n\ge 0}$ into
 a chain complex, we assume that the vertex set $V$ is totally ordered.
 When a simplex $\sigma$ has vertices $x_{0},\ldots,x_{n}$ with
 $x_{0}<\cdots < x_{n}$, we denote $\sigma=[x_{0},\ldots,x_{n}]$.
 Now the \emph{$n$-th boundary homomorphism}
 $\partial_{n}: C_{n}(K;k)\to C_{n-1}(K;k)$ is defined by
 \begin{equation}
  \partial_{n}([x_{0},\ldots,x_{n}]) = [x_{1},\ldots, x_{n}] +
   \sum_{i=1}^{n-1} (-1)^{i} [x_{0},\ldots, x_{i-1},x_{i+1},\ldots,x_{n}]
   + (-1)^{n}[x_{0},\ldots,x_{n-1}]. 
 \label{equation:boundary_for_simplicial_complex}
 \end{equation}
 These maps make $C_{*}(K;k)$ into a chain complex, i.e.
 $\partial_{n}\circ\partial_{n+1}=0$ for all $n$.

 The \emph{$n$-th homology group of $K$ with coefficients in $k$} is
 defined by 
 \[
  H_{n}(K;k) = \Ker (\partial_{n}: C_{n}(K;k)\to C_{n-1}(K;k))/\Ima
 (\partial_{n+1}: C_{n+1}(K;k)\to C_{n}(K;k)).
 \]
\end{definition}

When $Q$ is a simple quiver, any element of $N_{n}(Q)$ can be
represented by a sequence of vertices $(v_{0},\ldots,v_{n})$, as we have
observed in Remark \ref{remark:paths_in_simple_quiver}. 
An obvious idea is to form free $k$-modules generated by the sets
$N_{n}(Q)$ and define boundary homomorphisms by a formula similar to
(\ref{equation:boundary_for_simplicial_complex}). 
Unfortunately, $(x_{i-1},x_{i+1})$ may not be an arrow in $Q$, even
if both $(x_{i-1},x_{i})$ and $(x_{i},x_{i+1})$ are arrows in $Q$. The
boundary homomorphism $\partial_{n}$ cannot be defined.

For a small category, however, we may always compose morphisms to get a
new morphism. Thus we may define a chain complex.
In order to simplify the description, we restrict ourselves to the case
of acyclic categories.

\begin{definition}
 A small category $C$ is called \emph{acyclic} if
 \begin{enumerate}
  \item for distinct objects $x,y$, either $C(x,y)$ or $C(y,x)$ is
	empty, and 
  \item for any object $x$, the only morphism from $x$ to $x$ is the
	identity. 
 \end{enumerate}
\end{definition}

\begin{definition}
 An $n$-chain $(u_{n},\ldots,u_{1})$ in $C$ is called
 \emph{nondegenerate} if none of $u_{i}$'s is an identity
 morphism. 
 For $n\ge 1$, the set of nondegenerate $n$-chains in $C$ is denoted by 
 $\barN_{n}(C)$. We also define $\barN_{0}(C)=N_{0}(C)$.
 
 The submodule of $C_{n}(C;k)$ generated by $\barN_{n}(C)$ is denoted by
 $\barC_{n}(C;k)$.  
\end{definition}

\begin{example}
 When a quiver $Q$ does not contain a loop or an oriented cycle, $F(Q)$
 is an acyclic category.
\end{example}

\begin{definition}
 \label{def:homology_of_acyclic_category}
 Let $C$ be a small acyclic category and $k$ a commutative ring.
 The collection $\barC_{*}(C;k)=\{\barC_{n}(C;k)\}_{n\ge 0}$ can be made
 into a chain complex by defining the boundary homomorphisms as follows.
 When $n=1$
 \[
 \bar{\partial}_{1}(u) = t(u) - s(u).
 \]
 For $n\ge 2$, 
 \[
  \bar{\partial}_{n}(u_{n},\ldots,u_{1}) 
   =  (u_{n},\ldots, u_{2}) 
   + \sum_{i=1}^{n-1}(-1)^{i} (u_{n},\ldots,u_{i+1}\circ u_{i},
   \ldots, u_{1})
   + (-1)^{n} (u_{n-1},\ldots, u_{1}).
 \]
 Note that $\bar{\partial}_{n}=0$ for $n\le 0$ by definition.

 The \emph{$n$-th homology of $C$ with coefficients in $k$} is defined
 by 
 \[
  H_{n}(C;k) = \Ker \bar{\partial}_{n}/\Ima \bar{\partial}_{n+1}.
 \]
\end{definition}

\begin{remark}
 The homology groups can be defined for arbitrary small categories. See
 Appendix \ref{math} for details.
\end{remark}

\subsection{Homology of Weighted Quivers and Categories}
\label{homology_of_weighted_category}

Now suppose that our category $C$ is equipped with a weight function $w$
with values in a monoid $W$. We would like to put this information into
the homology of $C$. This can be done when $W$ \emph{acts} on a $k$-module
$M$ from the left, meaning that, for $x\in W$ and $m\in M$, an element
$xm\in M$ is given in such a way that 
\begin{enumerate}
 \item $x(m+m')=xm + xm'$ for $x\in W$ and $m,m'\in M$,
 \item $x(\alpha m)=\alpha xm$ for $x\in W$, $\alpha\in k$, and 
       $m\in M$,
 \item $x(x'm)=(xx')m$ for $x,x'\in W$ and $m\in M$, and
 \item $1m=m$, where $1$ is the unit of $W$.
\end{enumerate}
In other words, $M$ is a \emph{representation} of $W$.

With this information, we modify the definition of homology as follows.

\begin{definition}
 \label{def:homology_of_weighted_acyclic_category}
 Let $C$ be a small acyclic category with a weight function $w:C_{1}\to W$
 and $M$ a representation of $W$.
 For each nonnegative integer $n$, define a $k$-module
 \[
  \barC_{n}(C,w;M) = \barC_{n}(C;k)\otimes M,
 \]
 where the tensor product is taken over $k$.
 The boundary homomorphisms are given as follows.
 When $n=1$
 \[
 \bar{\partial}^{M}_{1}(u\otimes m) = t(u)\otimes w(u)m - 
 s(u)\otimes m.
 \]
 For $n\ge 2$, 
 \begin{multline*}
  \bar{\partial}^{M}_{n}(u_{n},\ldots,u_{1}) 
   = (u_{n},\ldots, u_{2})\otimes w(u_{1})m 
   + \sum_{i=1}^{n-1} (-1)^{i}
 (u_{n},\ldots,u_{i+1}\circ u_{i}, 
   \ldots, u_{1})\otimes m \\
   + (-1)^{n} (u_{n-1},\ldots, u_{1})\otimes m,
 \end{multline*}

 It is elementary to verify that these maps define a chain complex
 $\barC_{*}(C,w;M)$. 
 The \emph{$n$-th homology of $C$ with coefficients in $M$} is defined
 by 
 \[
 H_{n}(C,w;M) = \Ker \bar{\partial}^{M}_{n}/\Ima
 \bar{\partial}^{M}_{n+1}. 
 \]

 When $\Gamma=(Q,w)$ is a weighted quiver, we have a canonical extension
 to a weighted small category $(F(Q),\tilde{w})$ by Lemma
 \ref{canonical_extension} and Example
 \ref{example:from_weighted_quiver_to_weighted_category}.  
 We denote
 \[
  H_{n}(\Gamma;M)=H_{n}(Q,w;M) = H_{n}(F(Q),\tilde{w};M).
 \]
 This is called the \emph{homology of $(Q,w)$ with coefficients in $M$}. 
\end{definition}

This homology group can be regarded as a special case of a
construction, known as the \emph{homology
of a small category with coefficients in a functor}.
A precise meaning is recorded in Appendix \ref{math}.

In general, it is not easy to compute the homology of a small category. 
Fortunately, for categories of the form $F(Q)$, a very small chain
complex for computing the homology is known, which gives us the
following description of $H_{*}(Q,w;M)$.

\begin{theorem}
 \label{first_homology}
 Let $\Gamma=(Q,w)$ be a finite acyclic weighted quiver with
 weights in a monoid $W$ and $M$ be a representation of $W$.
 Define a map
 \[
  \varphi : \bigoplus_{u\in Q_{1}} k\{u\}\otimes M \rarrow{}
 \bigoplus_{x\in Q_{0}} k\{x\}\otimes M
 \]
 by
 \[
 \varphi(u\otimes m) = t(u)\otimes m - s(u)\otimes \left(w(u)\cdot
 m\right). 
 \]
 Then 
 $H_{n}(Q,w;M)=0$ for $n\ge 2$ and 
 \[
  H_{1}(Q,w:M) = \Ker\varphi.
 \]
\end{theorem}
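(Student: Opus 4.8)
The plan is to compute $H_{*}(Q,w;M)=H_{*}(F(Q),\tilde{w};M)$ directly from the explicit twisted complex $\barC_{*}(F(Q),\tilde{w};M)$ of Definition \ref{def:homology_of_weighted_acyclic_category} and to show that it collapses onto the two-term complex
\[
 0\to\bigoplus_{u\in Q_{1}}k\{u\}\otimes M \rarrow{\varphi} \bigoplus_{x\in Q_{0}}k\{x\}\otimes M\to 0
\]
which we denote $D_{*}$, concentrated in degrees $1$ and $0$. Recall that a nondegenerate $n$-chain of $F(Q)$ is a sequence of composable non-identity \emph{paths} of $Q$, and that $F(Q)$ is acyclic because $Q$ is finite with no oriented cycle, so this complex really does compute the homology. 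The arrows $Q_{1}=N_{1}(Q)$ are exactly the length-one paths, hence $\bigoplus_{u\in Q_{1}}k\{u\}\otimes M$ is a direct summand of $\barC_{1}(F(Q),\tilde{w};M)$; the content of the theorem is that, up to homology, nothing else in the complex survives.

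First I would exploit the defining feature of the free category: every non-identity path factors uniquely as $\gamma=\gamma'\circ u$ with initial arrow $u\in Q_{1}$. Unwinding $\bar{\partial}^{M}_{2}$ on a length-two path gives the relation $(u_{2}\circ u_{1})\otimes m\equiv u_{1}\otimes m+u_{2}\otimes w(u_{1})m$ modulo $\Ima\bar{\partial}^{M}_{2}$, and iterating unique factorization rewrites every length-$n$ path, modulo boundaries, as an explicit $M$-linear combination of its constituent arrows. This defines a reduction map $p:\barC_{*}(F(Q),\tilde{w};M)\to D_{*}$ (the identity in degree $0$). The plan is to promote $p$ to a deformation retraction by constructing a contracting homotopy $s$ in degrees $\ge 1$ from the same ``peel off the initial arrow'' operation, so that $p$ is a chain homotopy equivalence and the induced degree-one differential is precisely $\varphi$. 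Granting this, the conclusion is immediate: since $D_{*}$ is concentrated in degrees $1$ and $0$, we get $H_{n}(Q,w;M)=0$ for $n\ge 2$, and with no degree-two chains left to bound we get $H_{1}(Q,w;M)=\Ker\varphi$.

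The step I expect to be the main obstacle is checking that the homotopy is compatible with the \emph{twisted} differential. The weight action sprinkles factors $w(u)$ through every formula, so verifying $\bar{\partial}^{M}s+s\bar{\partial}^{M}=\mathrm{id}-\iota p$ (for a suitable section $\iota$) requires tracking the interaction of the alternating signs with the placement of these weights; this same bookkeeping is what pins down the exact form of $\varphi$, including which term carries the weight. A cleaner alternative, at the cost of invoking the machinery of the appendix, is to recognize $\barC_{*}(F(Q),\tilde{w};M)$ as the complex computing the derived colimit (equivalently a $\mathrm{Tor}$ over the category algebra $k[F(Q)]$) of the functor sending each object to $M$ and each path $\gamma$ to $\tilde{w}(\gamma)\cdot(-)$. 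A free category admits a canonical length-one free resolution of the constant functor --- reflecting that the classifying space $BF(Q)$ is homotopy equivalent to the one-dimensional complex $|Q|$ --- and this resolution has $D_{*}$ as its associated complex, yielding the vanishing above degree one and the identification of $H_{1}$ with $\Ker\varphi$ simultaneously.
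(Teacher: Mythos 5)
Your ``cleaner alternative'' is, in essence, the paper's actual proof: in Appendix \ref{functor_homology_derived_functor} the authors describe $H_{*}(C;G)$ as a derived functor (apply $kQ_{0}\otimes_{F(Q)}-$ to a projective resolution in $\lMod{F(Q)}$) and then invoke Crawley-Boevey's standard resolution (Proposition \ref{quiver_standard_resolution}), which has length one precisely because $F(Q)$ is free; applying $kQ_{0}\otimes_{F(Q)}-$ and the isomorphism $kQ_{0}\otimes_{F(Q)}(kF(Q)\Box_{Q_{0}}N)\cong N$ collapses it to exactly your two-term complex $D_{*}$, giving the vanishing in degrees $\ge 2$ and $H_{1}=\Ker\varphi$ simultaneously. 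The only difference of detail is which variable you resolve: you propose resolving the constant (trivial) module, which reflects $BF(Q)\simeq |Q|$, while the paper resolves the coefficient module $\Gamma(G)$ attached to the functor $x\mapsto M$, $u\mapsto w(u)\cdot(-)$; by balancing of Tor both are legitimate and both rest on the same hereditary property of free categories.

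Your primary route --- an explicit retraction of $\barC_{*}(F(Q),\tilde{w};M)$ onto $D_{*}$ --- is genuinely different from the paper and would give a more elementary, self-contained argument, but as written it has a real gap: the contracting homotopy is only promised, and it is where all the content lives. It can be completed (for instance, in degree $1$ one may define $s_{1}$ by induction on path length via $s_{1}(\gamma\otimes m)=-(\gamma',u)\otimes m+s_{1}(\gamma'\otimes w(u)m)$ for $\gamma=\gamma'\circ u$ with $u\in Q_{1}$, and one checks $\bar{\partial}_{2}s_{1}=\mathrm{id}-\iota p$ there; degrees $\ge 2$ require analogous but heavier sign bookkeeping), so the plan is sound, but until that verification is carried out the proof is incomplete. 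Finally, your instinct that the delicate point is ``which term carries the weight'' is vindicated by the paper itself: the theorem as stated puts the weight on the source term, $\varphi(u\otimes m)=t(u)\otimes m-s(u)\otimes(w(u)\cdot m)$, whereas Definition \ref{def:homology_of_weighted_acyclic_category}, both worked examples, and the map $\bar{f}(u\otimes m)=t(u)\otimes(w(u)\cdot m)-s(u)\otimes m$ produced at the end of the paper's own proof all put it on the target term. Your chain-map check --- that $\varphi$ composed with the ``peel off the initial arrow'' reduction must telescope to $\bar{\partial}_{1}(\gamma\otimes m)=t(\gamma)\otimes\tilde{w}(\gamma)m-s(\gamma)\otimes m$ --- is exactly the computation that settles this in favor of the weight-on-target form (the stated form would instead require a commutative $W$ or a right action), so completing your route would also have caught this inconsistency in the paper.
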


We need to prepare the language of homological algebra to prove this
theorem. A proof is given in Appendix
\ref{functor_homology_derived_functor}. 

We conclude this section by making sample computations of homology of
weighted networks.

\begin{example}
 \label{example:triangle_quiver}
 Let $\Gamma=(Q,w)$ be a simple weighted quiver with three
 vertices 
 $x_{1},x_{2},x_{3}$ shown in 
 Figure \ref{fig:triquiver}, where $w_{1}=w(x_1,x_0)$,
 $w_{2}=w(x_2,x_1)$, and $w_{3}=w(x_2,x_0)$.

 \begin{figure}[htbp]	
  \centering
  \includegraphics[width=.3\textwidth]{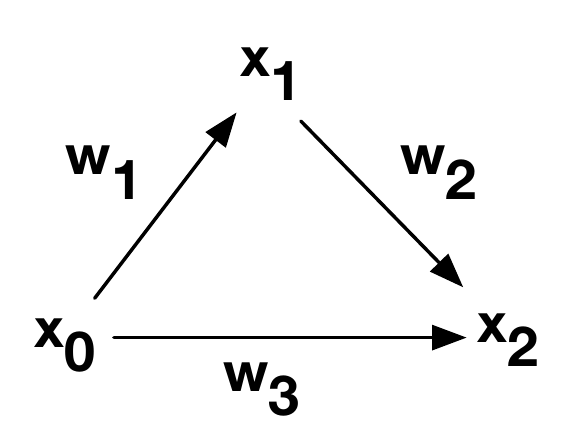}\hfill
  \caption{The weighted quiver $\Gamma$}
  \label{fig:triquiver}
 \end{figure}

 There are two routes from $x_{0}$ to $x_{2}$ in this
 network; the direct route $x_{0}\to x_{2}$ costs $w_{3}$ while the
 route $x_{0}\to x_{1}\to x_{2}$ costs
 \[
  w(x_2,x_1,x_0)=w(x_2,x_1)w(x_1,x_0)=w_2w_1.
 \]
 We would like to
 know the costs of these two routes are equal or not. 
 Let us show that this problem can be solved by computing the first
 homology of the weighted category $(F(Q),w)$.
 
 Suppose that $k$ is a field and that $W$ is a submonoid of
 $k^{\times}=k\setminus\{0\}$. Note that the monoid operation on $W$ is
 given by the multiplication of $k$.
 Then $W$ acts on $k$ by the multiplication.
 The module $k$ with this action is denoted by $k(w)$.
 Let us compute
 \[
  H_{1}(\Gamma;k(w))=H_{1}(Q,w;k(w))=H_{1}(F(Q),\tilde{w};k(w))
 \]
 under these conditions. 

% The set $\barN_{1}(F(Q))$ consists of
% four elements $(x_{1},x_{0})$, $(x_{2},x_{1})$, $(x_{2},x_{0})$, and
% $(x_{2},x_{1},x_{0})$ and the 
% set $\barN_{0}(F(Q))=N_{0}(F(Q))$ consists of 
% three elements $x_{0},x_{1},x_{2}$.
% We choose $(x_{1},x_{0})\otimes 1$, $(x_{2},x_{1})\otimes 1$, and
% $(x_{2},x_{0})\otimes 1$ for a basis of $\barC_{1}(F(Q);k(w))$ and
% $x_{0}\otimes 1$, $x_{1}\otimes 1$, and $x_{2}\otimes 1$ for a basis of
% $\barC_{0}(F(Q);k(w))$. 

 By Theorem \ref{first_homology}, it suffices to determine $\Ker \varphi$.
 The domain of the map $\varphi$ is a vector space with bases $Q_{1}$,
 which consists of three elements $(x_{1},x_{0})$, $(x_{2},x_{1})$, and
 $(x_{2},x_{0})$.
 The range of $\varphi$ has basis $Q_{0}=\{x_{0},x_{1},x_{2}\}$.

% The boundary homomorphism
% \[
% \bar{\partial}_{1} : \barC_{1}(F(Q);k(w)) \rarrow{}
% \barC_{0}(F(Q);k(w)) 
% \]
% is given by
% \begin{align*}
%  \bar{\partial}_{1}((x_{1},x_{0})\otimes 1) & = x_{1}\otimes w_{1}\cdot 1 -
%  x_{0}\otimes 1 \\
%  & = (-1) (x_{0}\otimes 1) + w_{1}(x_{1}\otimes 1) + 0(x_{2}\otimes 1) \\
%  \bar{\partial}_{1}((x_{2},x_{1})\otimes 1) & =
%  x_{2}\otimes w_{2}\cdot 1 - x_{1}\otimes 1 \\
%  & = 0(x_{0}\otimes 1) + (-1)(x_{1}\otimes 1) + w_{2}(x_{2}\otimes 1) \\
%  \bar{\partial}_{1}((x_{2},x_{0})\otimes 1) & =
%  x_{2}\otimes w_{3}\cdot 1 - x_{0}\otimes 1 \\
%  & = (-1) (x_{0}\otimes 1) + 0(x_{1}\otimes 1) + w_{3}(x_{2}\otimes 1) \\
%  \bar{\partial}_{1}((x_{2},x_{1},x_{0})\otimes 1) & =
%  x_{2}\otimes (w_2w_1)\cdot 1 - x_{0}\otimes 1 \\
%  & =  (-1) (x_{0}\otimes 1) + 0 (x_{1}\otimes 1) + (w_2w_1)(x_{2}\otimes 1).
% \end{align*}
% In other words, $\bar{\partial}_{1}$ is given by the following matrix
% \[
% \begin{pmatrix}
%   -1 & 0 & -1 & -1\\
%   w_{1} & -1 & 0 & 0 \\
%   0 & w_{2} & w_3 & w_{1}w_{2}
% \end{pmatrix}
% \]
% and $\Ker\bar{\partial}_{1}$ can be identified with the solution to the
% linear equation
% \[
% \begin{pmatrix}
%   -1 & 0 & -1 & -1\\
%   w_{1} & -1 & 0 & 0 \\
%   0 & w_{2} & w_3 & w_{1}w_{2}
% \end{pmatrix}
% \begin{pmatrix}
%  a \\ b \\ c \\ d
% \end{pmatrix} =
% \begin{pmatrix}
%  0 \\ 0 \\ 0
% \end{pmatrix}.
% \]

 With these bases the map $\varphi$ is given by
 \begin{align*}
  \bar{\partial}_{1}((x_{1},x_{0})\otimes 1) & = x_{1}\otimes w_{1}\cdot 1 -
  x_{0}\otimes 1 \\
  & = (-1) (x_{0}\otimes 1) + w_{1}(x_{1}\otimes 1) + 0(x_{2}\otimes 1) \\
  \bar{\partial}_{1}((x_{2},x_{1})\otimes 1) & =
  x_{2}\otimes w_{2}\cdot 1 - x_{1}\otimes 1 \\
  & = 0(x_{0}\otimes 1) + (-1)(x_{1}\otimes 1) + w_{2}(x_{2}\otimes 1) \\
  \bar{\partial}_{1}((x_{2},x_{0})\otimes 1) & =
  x_{2}\otimes w_{3}\cdot 1 - x_{0}\otimes 1 \\
  & = (-1) (x_{0}\otimes 1) + 0(x_{1}\otimes 1) + w_{3}(x_{2}\otimes 1).
 \end{align*}
 In other words, the map $\varphi$ is given by the following matrix
 \[
 \begin{pmatrix}
   -1 & 0 & -1 \\
   w_{1} & -1 & 0 \\
   0 & w_{2} & w_3 
 \end{pmatrix}
 \]
 and $\Ker\varphi$ can be identified with the solution to the
 linear equation
 \[
 \begin{pmatrix}
   -1 & 0 & -1 \\
   w_{1} & -1 & 0 \\
   0 & w_{2} & w_3 
 \end{pmatrix}
 \begin{pmatrix}
  a \\ b \\ c
 \end{pmatrix} =
 \begin{pmatrix}
  0 \\ 0 \\ 0
 \end{pmatrix}.
 \]
 
% By solving this linear equation, we obtain
% \[
% \Ker\bar{\partial_{1}} \cong
% \begin{cases}
%  k\left\langle \begin{pmatrix} 1 \\ w_{1} \\ -1 \\ 0 \end{pmatrix},
%  \begin{pmatrix} 1 \\ w_{1} \\ 0 \\ -1 \end{pmatrix}
%  \right\rangle, & w_{2}w_{1}=w_{3} \\ 
%  k\left\langle
%  \begin{pmatrix} 1 \\ w_{1} \\ 0 \\ -1 \end{pmatrix}
%  \right\rangle, & w_{2}w_{1}\neq w_{3}.
% \end{cases}
% \]

 The determinant of this matrix is
  \[
 \det \begin{pmatrix}
   -1 & 0 & -1 \\
   w_{1} & -1 & 0 \\
   0 & w_{2} & w_3 
 \end{pmatrix}
 = w_3 - w_1w_2. 
 \]
 Thus 
 \[
 \dim \Ker\varphi =
 \begin{cases}
  1, & w_{2}w_{1}=w_{3} \\ 
  0, & w_{2}w_{1}\neq w_{3}.
 \end{cases}
 \]
 When $w_{3}=w_{1}w_2$, a basis for $\Ker\varphi$ can be taken to be the
 vector
 $\begin{pmatrix} 1 \\ w_{1} \\ -1 \end{pmatrix}$.

% On the other hand, $\bar{N}_{2}(F(Q))$ consists of a single element
% $((x_{2},x_{1}),(x_{1},x_{0}))$ and the boundary homomorphism is given by
% \begin{align*}
%  \bar{\partial}_2(((x_{2},x_{1}),(x_{1},x_{0}))\otimes 1) & =
% (x_{2},x_{1})\otimes w_{1}\cdot 1 -
%  (x_{2},x_{1})\circ(x_{1},x_{0})\otimes 1 + 
%  (x_{1},x_{0})\otimes 1 \\
%  & =  (x_{2},x_{1})\otimes w_{1}\cdot 1 -
%  (x_{2},x_{1},x_{0})\otimes 1 + 
% (x_{1},x_{0})\otimes 1 \\
% & = (x_{1},x_{0})\otimes 1 + w_{1}((x_{2},x_{1})\otimes 1)
%  + 0 ((x_{2},x_{0})\otimes 1) + (-1)((x_{2},x_{1},x_{0})\otimes 1) 
% \end{align*}
% and we have
% \[
% \Ima\bar{\partial}_{2} \cong k\left\langle
% \begin{pmatrix} 1 \\ w_{1} \\ 0 \\ -1 \end{pmatrix}\right\rangle.
% \]

 Thus the first homology is given by
 \[
 H_{1}(\Gamma;k(w)) = \Ker\varphi
 \cong  
 \begin{cases}
  k \left\langle
  \begin{pmatrix} 1 \\ w_{1} \\ -1 \end{pmatrix} \right\rangle, &
  w_{2}w_{1}=w_{3} \\ 
  0, & w_{2}w_{1}\neq w_{3},
 \end{cases}
 \]
 which means that we can distinguish two cases by looking at the first
 homology. 
\end{example}

\begin{example}
 Consider the weighted quiver $\Sigma=(S,w)$ in Figure
 \ref{fig:square_quiver}. 

 \begin{center}
  \begin{figure}
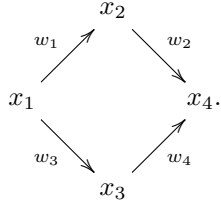

   \[
   \begin{diagram}
    \node{} \node{x_{2}} \arrow{se,t}{w_{2}} \node{} \\
    \node{x_{1}} \arrow{ne,t}{w_{1}} \arrow{se,b}{w_{3}} \node{}
    \node{x_{4}.} \\ 
    \node{} \node{x_{3}} \arrow{ne,b}{w_{4}}
   \end{diagram}
  \]
  \caption{The square quiver $\Sigma$.}
  \label{fig:square_quiver}
  \end{figure}
 \end{center}

 The map $\varphi$ is given by
 \begin{align*}
  \varphi((x_{2},x_{1})\otimes 1) & = x_{2}\otimes w_{1}\cdot 1 -
  x_{1}\otimes 1 \\
  & = (-1) (x_{1}\otimes 1) + w_{1}(x_{2}\otimes 1) + 0(x_{3}\otimes 1)
  + 0(x_{4}\otimes 1) \\
  \bar{\partial}_{1}((x_{4},x_{2})\otimes 1) & =
  x_{4}\otimes w_{2}\cdot 1 - x_{2}\otimes 1 \\
  & = 0(x_{1}\otimes 1) + (-1)(x_{2}\otimes 1) +
  0(x_{3}\otimes 1) + w_{2}(x_{4}\otimes 1) \\
  \bar{\partial}_{1}((x_{3},x_{1})\otimes 1) & =
  x_{3}\otimes w_{3}\cdot 1 - x_{1}\otimes 1 \\
  & = (-1) (x_{1}\otimes 1) + 0(x_{2}\otimes 1) +
  w_{3}(x_{3}\otimes 1) + 0(x_{4}\otimes 1) \\
  \bar{\partial}_{1}((x_{4},x_{3})\otimes 1) & =
  x_{4}\otimes w_{4}\cdot 1 - x_{3}\otimes 1 \\
  & = 0(x_{1}\otimes 1) + 0(x_{2}\otimes 1) + (-1)(x_{3}\otimes 1) +
  w_{4}(x_{4}\otimes 1).
 \end{align*}
 The matrix representation is
 \[
 D= \begin{pmatrix}
   -1    & 0     & -1    & 0     \\
   w_{1} & -1    & 0     & 0     \\
   0     & 0     & w_{3} & -1    \\
   0     & w_{2} & 0     & w_{4}
  \end{pmatrix}.
 \]
 This matrix can be made into the following matrix by row
 transformations.
 \[
 \widetilde{D}= \begin{pmatrix}
   -1 & 0  & -1                    & 0 \\
   0  & -1 & -w_{1}                & 0 \\
   0  & 0  & w_{3}                 & -1 \\
   0  & 0  & w_{4}w_{3}-w_{2}w_{1} & 0
  \end{pmatrix}.
 \]
 The rank of this matrix is
 \[
 \rank D= \rank \widetilde{D} =
 \begin{cases}
  3, &  \text{ if } w_{4}w_{3}=w_{2}w_{1}, \\
  4, &  \text{ if } w_{4}w_{3}\neq w_{2}w_{1}.
 \end{cases} 
 \]
 Thus we obtain
 \[
 \dim H_{1}(\Sigma;k(w)) = \dim \Ker\bar{\partial}_{1} = 4-\rank D = 
 \begin{cases}
  1, & \text{ if } w_{4}w_{3}=w_{2}w_{1}, \\
  0, & \text{ if } w_{4}w_{3}\neq w_{2}w_{1}.
 \end{cases}
 \]
% On the other hand, we can easily see that $\dim \Ima\bar{\partial}_{2}=2$.

 Again we may tell if $w_{4}w_{3}=w_{2}w_{1}$ or not by computing the
 first homology.
\end{example}

\subsection{A Weighted Quiver Kernel by Homology}
\label{weighted_quiver_kernel}

%In this section, given a weighted quiver we would like to compute a
%\emph{signature} or \emph{feature vector} which utilizes our proposed
%weighted quiver homology.
%As is written in Introduction, our approach is inspired by the
%neighborhood aggregation approaches, especially the \emph{Weisfeiler-Lehman}
%(WL) kernel~\cite{Shervashidze-Schweitzer-vanLeeuwen-Mehlhorn-Borgwardt2011}.   

After having introduced the necessary terminology and developed a
weighted quiver homology, we are now ready to describe our method for
constructing feature vectors.  

Before we describe our algorithm, we explain how one constructs an
\emph{acyclic quiver} (or \emph{directed acyclic graph (DAG)}) from a
simple quiver (or directed graph). In order to break cycles and leave a
quiver ``acyclic'', one must 
identify and remove a minimum set of arrows. In graph theory, this is a
well-known NP-hard problem, referred to as the \emph{minimum feedback
arc set problem}.  
Due to the NP-hard nature of this problem, we resort to a randomized
approximation algorithm proposed by Berger and Shor
\cite{Berger-Shor1990}.

\begin{algorithm}[tbp]
	\DontPrintSemicolon % Some LaTeX compilers require you to use \dontprintsemicolon instead 
	\KwIn{
		Simple weighted quiver $\Gamma=(Q,w)$
	}
	\KwOut{
		A feedback arc set $F$ for $\Gamma$
	}
	\tcc{Initialization of Feedback arc set with empty set}
	$F \gets \emptyset$\;
	
	\tcc{Process vertices in a fixed permuted order}
	\For{ $v \in \Pi(Q_0)$ }{
		\tcc{if there are more incoming arrows than outgoing ones...}
		\If{ $\delta_{in}(v) > \delta_{out}(v) $ }{
			$E_1 := \{ e \mid e \in Q_1 , s(e)=v  \}$ \;
			$F \gets F \cup E_1$ \;
		}
		\Else{
			$E_2 := \{ e \mid e \in Q_1 , t(e)=v  \}$ \;
			$F \gets F \cup E_2$ \;
		}
		\tcc{Discard edges $E_1 \cup E_2$ from $Q_1$}
		$ Q_1 \gets Q_1 \setminus (E_1 \cup E_2)$
	}
	\tcc{Set of feedback arcs dropped from $\Gamma$}
	\Return{$F$}\;
	\caption{{\sc Berger and Shor Algorithm to compute Feedback Arc Set.}}
	\label{algo:berger}
\end{algorithm}

For the sake of completeness, Algorithm~\ref{algo:berger} describes the
Berger and Shor algorithm in detail. This algorithm begins by choosing a
random permutation $\Pi(Q_0)$ of the vertices of the incoming 
quiver $\Gamma$. The vertices are processed in the order given by the
permutation (Line $2$). 
If a given vertex $v$ has more incoming arrows than outgoing ones, 
then $E_1$ contains the outgoing nodes and this is added to our feedback
set $F$ (Lines $3$--$5$). 
The opposite case is handled on Lines $7$--$8$. The edges in $E_1 \cup
E_2$ are removed from $Q_1$ and the remaining arrows make $\Gamma$
acyclic. The set $F$ contains the feedback arcs/arrows that are
dropped. 

The intuition behind this approach is that we choose to keep either the
incoming or outgoing arrows at any given time which ensures that the
resulting quiver is acyclic. Additionally, we choose to keep the set of
incoming or outgoing arrows with larger cardinality, thus resulting in a
larger acyclic quiver. 
This randomized algorithm runs in $O(M+N)$ (where $M$ and $N$ denote the
number of arrows and vertices in the quiver) and produces an acyclic
quiver containing at least $1/2 + \Omega(1/ \sqrt{ \delta_{max} })|Q_0|$
arrows, where $\delta_{max}$ is the maximum degree of any vertex in
$Q_0$. 

\begin{algorithm}[tbp]
	\DontPrintSemicolon % Some LaTeX compilers require you to use \dontprintsemicolon instead 
	\KwIn{
		Simple weighted quiver $\Gamma=(Q,w)$, Number of iterations $H$, Total number of nodes $N$ and arrows $M$ in $Q$
	}
	\KwOut{
		Feature matrix $X \in \mathds{Z}^{N \times H}$ representing $\Gamma$
	}
	\tcc{Initialization}
	$ X \gets $ Empty $N \times H$ matrix \;
	$i \gets 0$ \;
	
	\For{ $v \in Q_0$ }{
		$i \gets i+1$ \;
		
		\For{ $k \in \{ 1 \dots H\}$  }{
			$N_k(v) \gets $ Set of vertices in $Q$ within $k$-hops from $v$ \;
			\tcc{Makes use of Algorithm~\ref{algo:berger} to build DAG}
			$Q'_k \gets$ Sub-quiver DAG induced by vertices in $N_k(v)$ \; 
			$(n,m) \gets$ (\# of nodes, \# of edges) in $Q'_k$ \; 
			\tcc{Build the boundary matrix $D$ for subquiver $Q'_k$}
			$D \gets$ Empty $n \times m$ matrix \;
			\ForEach{ arrow $u \rarrow{w'} v$ in $Q'_k$ }{
				$D[u,i] \gets -1$ \;
				$D[v,i] \gets w'$ \;
			}
			$rank(D) \gets$ Compute rank of matrix $D$ \;
			\tcc{ Get $dim H_1(Q'_k,w'; k(w') $}
			$X[i,k] \gets m - rank(D)$
		}
	}
	\Return{$X$}\;
	\caption{{\sc Computes feature vectors based on weighted quiver homology.}}
	\label{algo:quiv}
\end{algorithm}

Algorithm~\ref{algo:quiv} describes in detail all the steps required for
feature computation. 
A high-level description of our algorithm consists of the following operations. 

For every vertex $v$ in the underlying quiver's vertex set $Q_0$ (line
$3$), we iterate $H$ times, each time computing a progressively larger
acyclic sub-quiver (in the form of a \emph{directed acyclic graph}
(DAG)) and its weighted quiver homology (lines $6$--$14$).  
Note that the variable $k$ ranges from $1$ to $H$, and in each iteration
for a given value of $k$, we compute the set of vertices $N_k(v)$ that
are $k$-hops away from $v$, i.e., the set of vertices with a directed
path of length at most $k$ from $v$. Finally, the dimensions of the
first homology for each $k$ are concatenated to form a vector of size
$H$ (line $14$). 
For a given simple weighted quiver $\Gamma = (Q,w)$ with $N$ nodes and
$M$ arrows, our procedure results in $N$ feature vectors, each of size
$H$. 

\subsection*{Time complexity}

The dominant costs in our computation are
incurred by the matrix rank computation and computing the
$k$-hop neighborhood.  

To begin with, we analyze the rank computation cost.
In the worst case, the dimension of matrix $D$ representing
a sub-quiver is $N \times M$, when the sub-quiver is the
same as the quiver $Q$.  
According to Golub and Van Loan \cite{Golub-VanLoan2013} the
best known rank computation algorithms that internally
involve \emph{singular value decomposition} (SVD) for a
$N \times M$ matrix has a time complexity of $O(NM^2)$. 

Next, we study the cost of computing the $k$-hop neighborhood.
Let $N^{(k)}_{\max}$ and $M^{(k)}_{\max}$ denote the maximum
number of vertices and edges, respectively, in a subquiver
induced by a $k$-hop neighborhood around a vertex. Then,
steps in lines $6$--$7$ have a time-complexity of
$O(N^{(k)}_{\max} + M^{(k)}_{\max}  )$.  

Then, lines $6$--$14$, have a total complexity of
$O(N^{(k)}_{\max} + M^{(k)}_{\max} + NM^2 )$. 
As this is repeated for each vertex (i.e., $N$ of them) and
for $H$ times, we get an overall time complexity of
$O(NH(N^{(k)}_{\max} + M^{(k)}_{\max} + NM^2  )  )$.

\section{Applications}
\label{applications}

In this section, we illustrate the practical applicability of our
weighted quiver homology and its corresponding feature vectors to two
well-known tasks on real-world multi-graphs in machine learning and
other graph / network analysis research literature. Namely, we focus on:
(i) Creating node embeddings for weighted directed graphs and (ii)
detecting communities in weighted directed graphs.  

\subsection{Node Embeddings of Weighted Directed Graphs}
\label{subsec:node}

\begin{figure}[tbp]
	\centering
	\subfloat[$H=4$]{{\includegraphics[width=6.8cm]{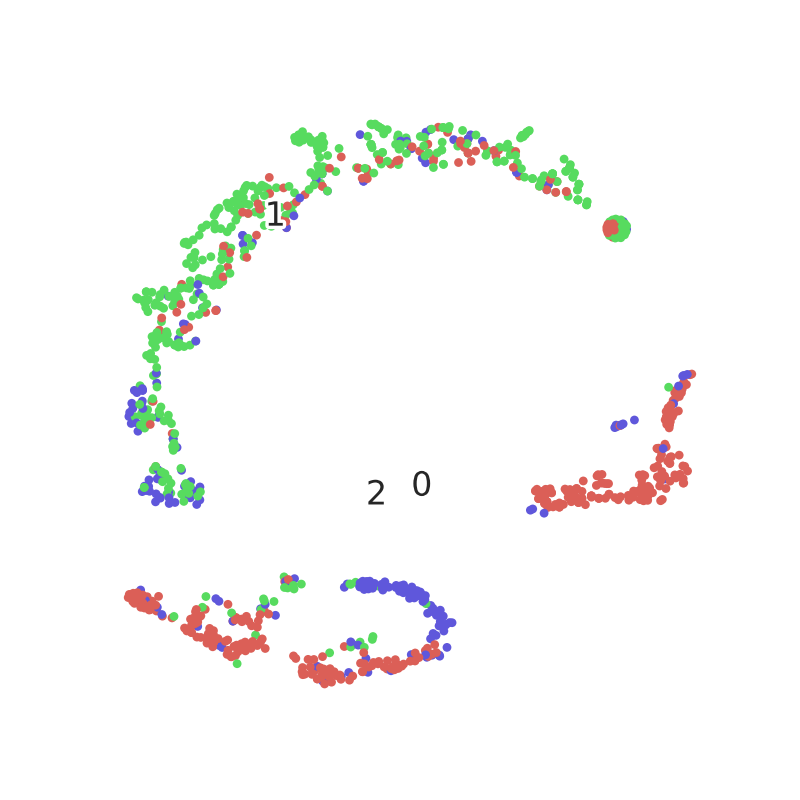} }}%
	\qquad
	\subfloat[$H=5$]{{\includegraphics[width=6.8cm]{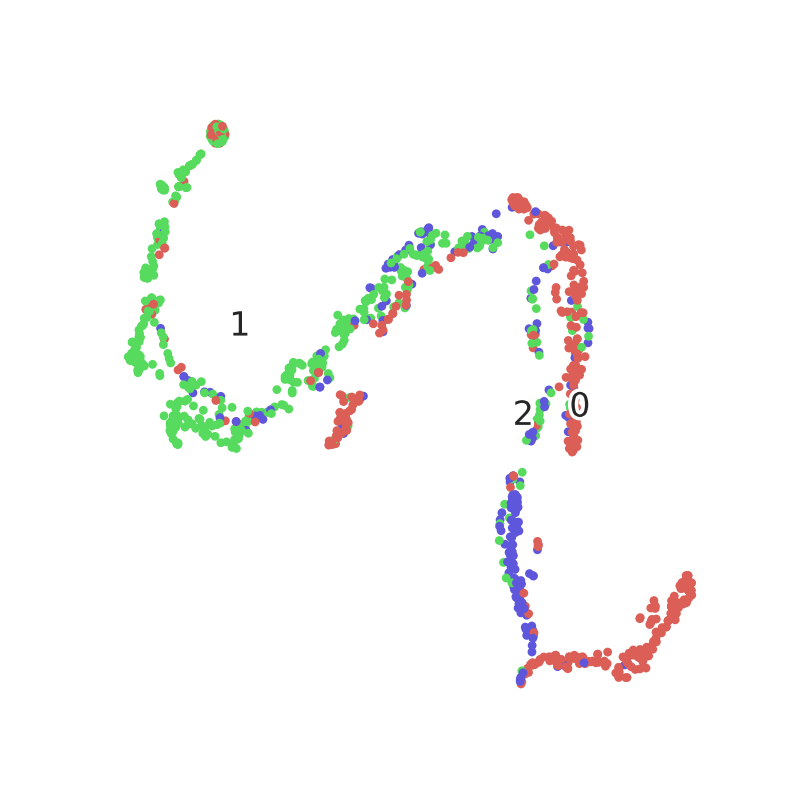} }}%
	\qquad
	\subfloat[$H=6$]{{\includegraphics[width=6.8cm]{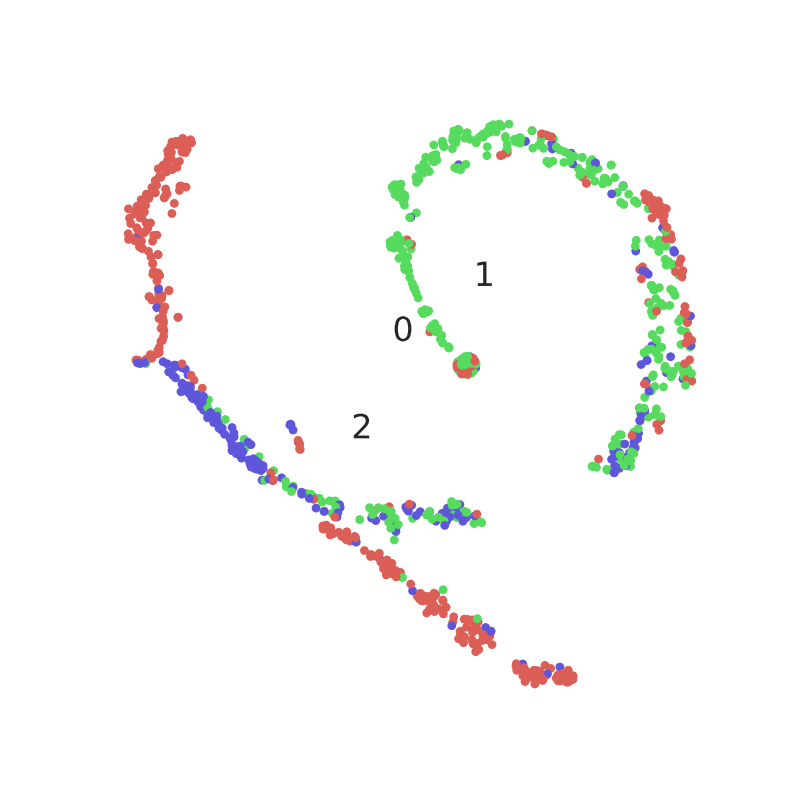} }}%
	\caption{t-SNE plots of the node embeddings as we vary $H$ in Algorithm~\ref{algo:quiv}. With increasing $H$, we notice a better separation of nodes pertaining to different labels / categories. Higher the separation achieved, better the quality of the node embedding.  } 
	\label{fig:node_emb}%
\end{figure}

Given the ubiquitous prevalence of graphs, their analysis in areas like
\emph{machine learning} (ML) plays a fundamental role. In order to apply
existing ML methods to graphs (e.g., to predict new interactions or
discover latent relations between objects represented as nodes /
vertices), one learns a representation of the graph that is amenable to
be used in ML algorithms. 

However, graphs are inherently unordered, irregular, and combinatorial
in nature made up of \emph{nodes / vertices} and \emph{edges / links}
between nodes, while most ML methods (e.g. neural networks) favor
continuous vector representations. To get around the difficulties in
using discrete graph representations in ML, graph embedding methods
learn a continuous vector space for the graph, assigning each node
(and/or edge) in the graph to a specific position in a vector space. We
refer the reader to a survey on node embeddings~\cite{1808.02590} for more
details. 

\begin{description}
 \item[Task:] More formally, given a weighted directed graph
	    $G=(V,E,W)$, where $V$ and $E$ denote the set of nodes and
	    directed edges (arrows) connecting them. $W$ is the set of
	    \emph{edge weights} corresponding to each directed edge
	    $e = (u,v) \in E$. The graph $G$ can be represented by a
	    \emph{weighted adjacency matrix}
	    $A \in \mathds{R}^{|V| \times |V|}$, where the $u,v$-th
	    element in $A$, i.e., 
	    $A_{u,v}$ has a value which corresponds to the edge weight
	    in $W$ of the directed edge $(u,v)$. 
	    In general, node embedding methods try to minimize an objective
	    \[
	    \min_Y L(f(A), g(Y))
	    \]
	    where $Y \in \mathds{R}^{|V| \times d}$, for $d \ll |V|$ is
	    a $d$-dimensional node embedding matrix; 
	    $f: \mathds{R}^{|V| \times |V|} \rarrow{} \mathds{R}^{|V|
	    \times |V|}$ is a transformation of the weighted adjacency
	    matrix; $g: \mathds{R}^{|V| \times d} \rarrow{}
	    \mathds{R}^{|V| \times |V|} $ is a pairwise edge function;
	    and $L:  \mathds{R}^{|V| \times |V|} \rarrow{}
	    \mathds{R}^{|V| \times |V|}$ is a loss function. 

 \item[Dataset:] For our empirical evaluation, we used the popular
	    \emph{Cora
	    dataset}~\cite{Sen-Namata-Bilgic-Getoor-Gallagher-EliassiRad2008}.    
	    The \emph{Cora dataset} is a research citation network
	    (directed) comprising of $2708$ scientific publications
	    classified into one of \emph{seven categories}. The citation
	    network consists of $5429$ links. Each publication (vertex)
	    in the dataset is described by a $0/1$-valued word vector
	    indicating the absence/presence of the corresponding word
	    from the dictionary. The dictionary consists of $1433$
	    unique words. Thus, each vertex has a corresponding binary
	    vector of length $1433$.

 \item[Experimental Setup:] For our experiment, we only focused on a
	    subset of the categories, i.e., three categories, namely
	    \emph{Genetic\_Algorithms} (Label $0$),
	    \emph{Probabilistic\_Methods} (Label $1$), and
	    \emph{Reinforcement\_Learning} (Label $2$).  
	    We computed an edge weight for each edge as the
	    \emph{Jaccard distance} between the vectors associated with
	    the start and terminal vertices of the edge.  

 \item[Results:] In Figure~\ref{fig:node_emb}, we notice that as we
	    increase the number of iterations $H$ in our method, we get
	    a larger dimensional feature vector which starts to achieve
	    better separation of topics / labels among the nodes in the
	    citation network. Therefore, nodes that represent a given
	    topic cluster together and also move away from topics that
	    are different. We see this separation improve as we vary $H$
	    from $4$ to $6$. In order to visualize these $H$-dimensional
	    vectors representing the nodes in the Cora graph, we used
	    $t$\emph{-Distributed Stochastic Neighbor Embedding}
	    (t-SNE)~\cite{vanderMaaten-Hinton2008}, which is a technique
	    for dimensionality reduction that is particularly well
	    suited for the visualization of high-dimensional datasets. 
\end{description}

\subsection{Community Detection in Weighted Graphs}
\label{subsec:comm}

\begin{figure}[tbp]
	\centering
	\subfloat[Original graph]{{\includegraphics[width=14cm]{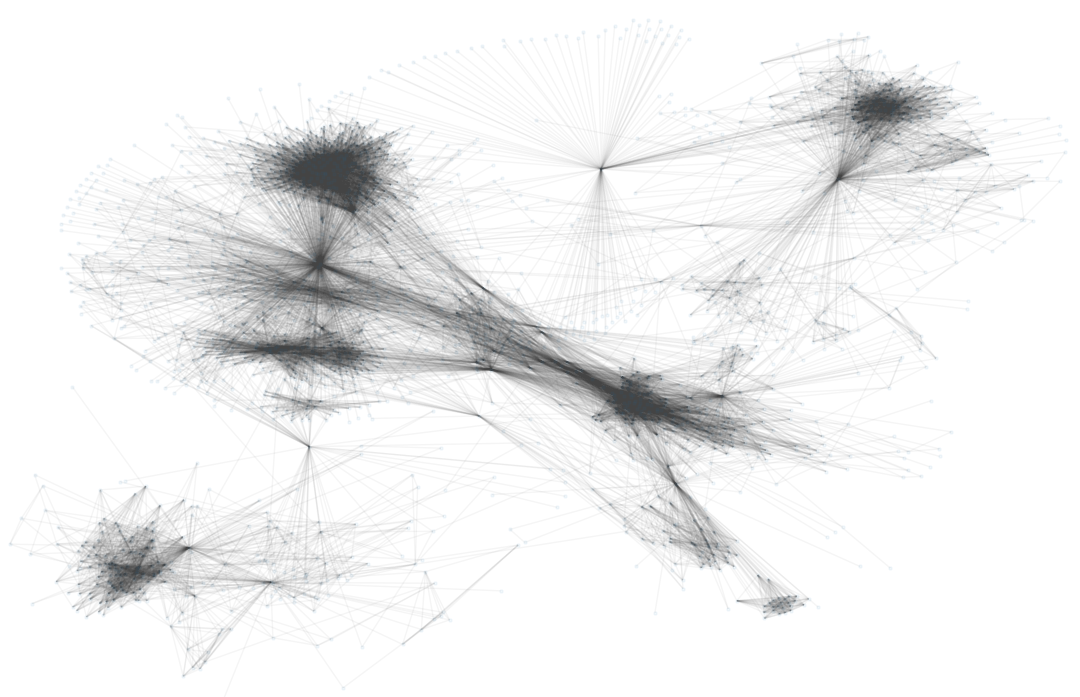} }}%
	\qquad
	\subfloat[Communities marked in original graph]{{\includegraphics[width=14cm]{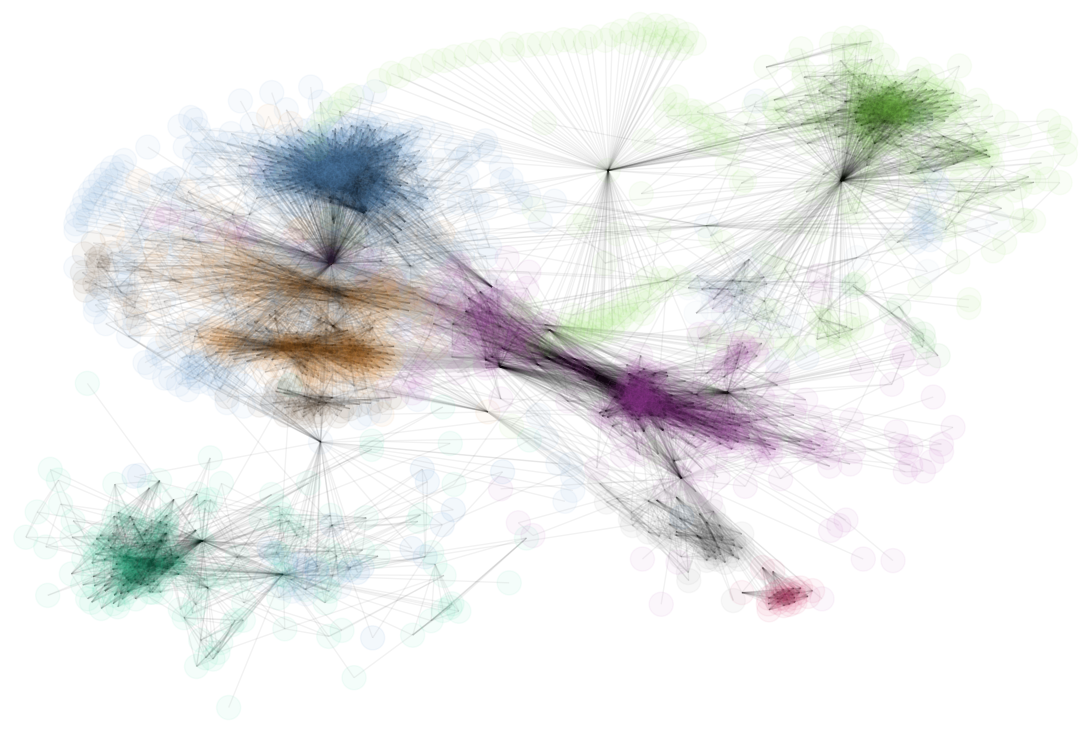} }}%
	\caption{Seven distinct communities (marked with different colored nodes) detected using weighted quiver homology features in Facebook graph with $2094$ nodes and $20$K edges. }%
	\label{fig:comm}%
\end{figure}

Complex systems can be represented in terms of graphs, where the
elements composing the complex system are described as nodes / vertices
and their interactions as edges / links.  
At a global level, the nature of these interactions is far from trivial
and very complex in nature.  
At a mesoscopic (intermediate) scale, it is possible to identify a group
of nodes that are densely connected among themselves, but sparsely
connected to the rest of the graph. Such heavily interconnected group of
vertices are often characterized as \emph{communities} and occur in a
wide variety of networked systems. For example, such communities can be
considered as independent portions of a graph, playing a \emph{similar
role}, like the tissues or the organs in the human body. Community
detection finds applications in a wide and diverse set of areas such as
biology, sociology, and computer science, to name a few, where systems
are often represented as graphs. This problem is extremely hard and has
not yet been solved satisfactorily, despite the huge effort of a large
interdisciplinary community of scientists working on it over the past
few years. This task gets even harder when having to identify such
communities in weighted directed graphs. We refer the reader to a survey
on community detection~\cite{Fortunato2010} for more details. 

\begin{description}
 \item[Task:] Given a graph $G=(V,E)$, we denote the degree of a node
	    $u$ by $\delta_u$. If we consider a subset of nodes $V'
	    \subseteq V$ that are densely connected and represent a
	    \emph{community}, to which node $u$ belongs. We denote the
	    sum of degrees of the nodes present in $V'$ by
	    $\delta_u(V)$. Then, this total degree can be split into two
	    contributions 
	    \[
	    \delta_u(V) = \delta_u^{in}(V') + \delta_u^{out}(V')
	    \]
	    where $\delta_u^{in}(V')$ is the number of edges connecting
	    $u$ to other nodes in $V'$ and  
	    $\delta_u^{out}(V')$ is the number of edges connecting $u$
	    to $V \setminus V'$ (i.e., rest of the nodes outside
	    $V'$). The subset $V'$ is a termed a \emph{community in the
	    strong sense}, if  
	    \[
	    \delta_u^{in}(V') > \delta_u^{out}(V') \text{,
	    \hspace{2em}} \forall u \in V' 
	    \]
 \item[Dataset:] We downloaded the Facebook graph
	    dataset\footnote{\url{http://snap.stanford.edu/data/ego-Facebook.html}}
	    from SNAP~\cite{McAuley-Leskovec2012}. 
	    This dataset consists of \emph{circles} (or \emph{friends
	    lists}) from Facebook. Facebook data was collected from survey
	    participants using this Facebook application. We used a
	    smaller subset of the large graph, by taking into account
	    $2094$ vertices and $20$K edges connecting them.

 \item[Experimental Setup:] As this was an undirected graph dataset, we
	    assigned an orientation to each edge $(u,v)$, by setting $u
	    \rarrow{} v$, if $u < v$, and $u \larrow{} v$, if $u >
	    v$. Accordingly, an edge weight was also assigned as
	    $|u-v|$. $H$ was fixed at $3$, in our experiments. We first
	    computed a node embedding as was done in
	    Section~\ref{subsec:node}, ran a DBSCAN density-based
	    clustering, and mapped the clusters back to the original
	    nodes in the graph. 

 \item[Results:] We detected $7$ different communities that are each
	    uniquely colored and  depicted in
	    Figure~\ref{fig:comm}(b). It can be visually observed that
	    our method does a fairly good job of detecting communities
	    in the strong sense in the Facebook graph. 
\end{description}

\appendix

\section{Mathematics for Homology of Small Categories}
\label{math}

In this appendix, we collect precise mathematical definitions and
statements for those who have enough mathematical backgroud. Here we
assume that the reader is familiar with basic category theory and
algebraic topology, including simplicial homotopy theory.

\subsection{Homology of Small Categories with Coefficients in Functors}
\label{functor_homology_definition}

Recall that we have introduced the set $N_{n}(Q)$ of $n$-chains in a
quiver $Q$. By regarding a small category $C$ as a quiver, we have a
collection $\{N_{n}(C)\}_{n\ge 0}$ of sets.
When $C$ is a category, this collection has a structure of simplicial
set. 

\begin{lemma}
 \label{def:nerve}
 For a small category $C$, the collection $N(C)=\{N_{n}(C)\}_{n\ge 0}$
 can be made into a simplicial set by the following operators.
 The face operators $d_i : N_n(C) \rightarrow N_{n-1}(C)$ are given as
 follows.
 When $n=1$, $d_{0}(u)=t(u)$ and $d_{1}(u)=s(u)$.
 When $n\ge 2$,  
 \begin{align*}
 (u_{n},\ldots,u_{1}) & = x_0 \rarrow{u_1} \dots \rarrow{u_i} x_i
  \rarrow{u_{i+1}} x_{i+1} 
  \dots \rarrow{u_n} x_n \\ 
  &\xmapsto{d_i} \left \{
  \begin{aligned}
   & x_1 \rarrow{u_2} x_2 \dots \rarrow{u_{n}} x_n, && \text{if}\ i=0 \\
   & x_0 \rarrow{u_1} \dots \rarrow{} x_i \rarrow{u_{i} \circ u_{i+1}}
   x_{i+2} \rarrow{} \dots \rarrow{u_{n}} x_n, 
   && \text{if}\ 
   0 < i < n \\ 
   & x_0 \rarrow{u_{1}} \dots \rarrow{u_{n-1}} x_{n-1}, && \text{if}\ i=n
  \end{aligned} \right.
 \end{align*}
 The degeneracy operators $s_{i}: N_{n}(C) \to N_{n+1}(C)$ are defined
 by 
 \[
  s_{i}(u_n,\ldots,u_{1}) = (u_{n},\ldots, u_{i+1}, 1_{x_{i}} ,
 u_{i},\ldots, u_{1}).
 \]
 This simplicial set is called the \emph{nerve} of $C$.
% Note that $N$ defines a functor
% \[
%  N : \Cat \rarrow{} \SSet.
% \]
\end{lemma}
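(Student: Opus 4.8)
To prove the lemma I must show that the operators $d_i$ and $s_i$ make the collection $\{N_n(C)\}_{n\ge 0}$ into a simplicial set, that is, a functor $\Delta^{\op}\to\Set$. The plan is to take the conceptual route, which simultaneously proves the claim and renders the otherwise tedious index bookkeeping transparent. First I would identify the set $N_n(C)$ of $n$-chains with the set $\Cat([n],C)$ of functors from the poset $[n]=\{0<1<\cdots<n\}$, regarded as a small category, into $C$: such a functor is exactly a choice of objects $x_0,\ldots,x_n$ together with morphisms $x_{i-1}\to x_i$ (all higher composites being forced by functoriality), which is precisely a chain $(u_n,\ldots,u_1)$. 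Under this identification the assignment $[n]\mapsto N_n(C)$ is the restriction along the canonical inclusion $[-]:\Delta\to\Cat$ of the contravariant functor $\Cat(-,C)$, so its functoriality in $\Delta^{\op}$ is automatic.

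The second step is to check that the operators written down in the statement agree with precomposition by the standard generators of $\Delta$: the cofaces $\delta^i:[n-1]\hookrightarrow[n]$ (the order-preserving injection omitting the value $i$) and the codegeneracies $\sigma^i:[n+1]\to[n]$ (the surjection repeating the value $i$). Precomposing a functor $[n]\to C$ with $\delta^i$ deletes the object $x_i$: for $0<i<n$ this forces the two arrows into and out of $x_i$ to be composed, and for $i=0$ or $i=n$ it simply forgets the outermost arrow --- exactly the three branches of the displayed formula for $d_i$. Likewise $\sigma^i$ inserts a repeated object, which under the functor becomes the identity $1_{x_i}$, matching the formula for $s_i$. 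Because the cosimplicial identities among the $\delta^i$ and $\sigma^i$ already hold in $\Delta$, the simplicial identities for $d_i$ and $s_i$ follow formally by contravariance, and we are done.

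If one prefers a hands-on verification that avoids the functor-category description, the remaining task is to confirm the five families of simplicial identities directly. The face--face relation $d_id_j=d_{j-1}d_i$ for $i<j$ carries the real content: when both $i$ and $j$ are interior indices, each side performs two successive compositions of adjacent arrows, and their equality is exactly the associativity law $(u\circ v)\circ w=u\circ(v\circ w)$ of $C$, while the boundary subcases ($i=0$, $j=n$, or $j=i+1$) are routine once the shifted indices are tracked carefully. The degeneracy--degeneracy relations amount to the observation that inserting two identities in either order yields the same chain, and the mixed relations $d_is_j$ reduce to the unit laws $1_y\circ u=u=u\circ 1_x$ of Definition~\ref{def:small_category}; in particular $d_js_j=d_{j+1}s_j=\mathrm{id}$ records that deleting a freshly inserted identity restores the original chain. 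I expect the index bookkeeping in the interior face--face case to be the only genuine obstacle; every other identity is immediate from the category axioms.
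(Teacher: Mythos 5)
Your proposal is correct, and in fact it supplies something the paper omits entirely: the paper states this lemma without proof, treating the nerve as a standard construction, so there is no in-paper argument to compare against. Your conceptual route --- identifying $N_{n}(C)$ with the set $\Cat([n],C)$ of functors from the poset $[n]$ into $C$, so that $N(C)$ is the restriction of the contravariant functor $\Cat(-,C)$ along $\Delta\to\Cat$, with the stated $d_{i}$ and $s_{i}$ agreeing with precomposition by the cofaces $\delta^{i}$ and codegeneracies $\sigma^{i}$ --- is the standard proof, and it correctly reduces all simplicial identities to functoriality. Two small points are worth flagging. First, your description of $d_{i}$ for $0<i<n$ (delete the object $x_{i}$ and compose the arrows through it, producing $u_{i+1}\circ u_{i}\colon x_{i-1}\to x_{i+1}$) silently corrects typos in the paper's displayed formula, which writes an arrow $u_{i}\circ u_{i+1}$ from $x_{i}$ to $x_{i+2}$; that expression does not type-check, whereas your version is the one consistent with the boundary operator the paper itself uses in Definition~\ref{def:homology_of_acyclic_category}. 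Second, in your fallback hands-on sketch you locate associativity in the ``generic interior'' case of $d_{i}d_{j}=d_{j-1}d_{i}$ and call $j=i+1$ routine; it is actually the reverse: associativity of composition is needed exactly when the two deleted vertices are adjacent (the identity $d_{i}d_{i+1}=d_{i}d_{i}$ compares $(u_{i+2}\circ u_{i+1})\circ u_{i}$ with $u_{i+2}\circ(u_{i+1}\circ u_{i})$), while for $j>i+1$ the two compositions act on disjoint pairs of arrows and commute trivially. Neither point affects the validity of your main argument, since the functorial proof is self-contained.
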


There is a standard way to generate a chain complex from a simplicial
set. 

\begin{definition}
 Let $k$ be a commutative ring.
 For a simplicial set $X$, the free $k$-module generated by $X_{n}$
 is denoted by $C_{n}(X;k)$. Define
 \[
  \partial_{n}: C_{n}(X;k) \rarrow{} C_{n-1}(X;k)
 \]
 by
 \[
  \partial_n = \sum_{i=0}^{n} (-1)^i d_i.
 \]
 The collection $C(X;k)=\{C_{n}(X;k),\partial_{n}\}$ forms a
 chain complex over $k$. The homology of this chain complex is denoted by
 \[
  H_{n}(X;k) = H_{n}(C(X;k))=\Ker\partial_{n}/\Ima\partial_{n+1}
 \]
 and is called the \emph{homology group of $X$ with coefficients in $k$}.

 When $X=N(C)$ for a small category $C$, the homology of $N(C)$ with
 coefficients in $k$ is denoted by $H_{n}(C;k)$.
\end{definition}

When $C$ is equipped with a functor $G:C\to \lMod{k}$,
we may modify the definition of the nerve and homology as follows.

\begin{definition}
 Let $G:C\to \lMod{k}$ be a functor on a small category $C$.
 Define
 \[
  B_{n}(C;G) = \bigoplus_{x\in C_{0}} k\langle
 s^{-1}(x)\rangle\otimes G(x),
 \]
 where $s: N_{n}(C)\to C_0$ is the map defined in Definition
 \ref{def:quiver}. 

 This collection of $k$-modules $B(C;G)=\{B_{n}(C;G)\}_{n}$ can be made
 into a simplicial $k$-module as follows.
 When $n=1$, the face operators are given by
 \begin{align*}
  d_{0}(u\otimes m) & =t(u)\otimes G(u)(m) \\
 d_{1}(u) & =s(u)\otimes m.
 \end{align*} 
 When $n\ge 2$, the face operators are given by
 \[
  d_{i}((u_{n},\ldots,u_{1})\otimes m) 
  = \left \{
  \begin{aligned}
   & (u_{n},\ldots,u_{2})\otimes G(u_{1})(m), && \text{if}\ i=0 \\
   & (u_{n},\ldots,u_{i+1}\circ u_{i},\ldots,u_{1})\otimes m,
   && \text{if}\ 
   0 < i < n \\ 
   & (u_{n-1},\ldots,u_{1})\otimes m, && \text{if}\ i=n
  \end{aligned} \right.
 \]
 The degeneracy operators $s_{i}: B_{n}(C;G) \to B_{n+1}(C;G)$ are defined
 by 
 \[
  s_{i}((u_n,\ldots,u_{1})\otimes m) = (u_{n},\ldots, u_{i+1}, 1_{x_{i}} ,
 u_{i},\ldots, u_{1})\otimes m.
 \]

 The face operators can be assembled in the usual way to define a
 boundary operator
 \[
  \partial_{n} = \sum_{i=0}^{n} (-1)^{i}d_{i} : B_{n}(C;G) \rarrow{}
 B_{n-1}(C;G). 
 \]
 The homology of this chain complex is denoted by $H_{n}(C;G)$ and is
 called the homology of $C$ with coefficients in $G$.
\end{definition}

\begin{example}
 \label{example:representation_as_functor}
 Let $w:C\to W$ be a weight function and $M$ a representation of $W$. 
 When $W$ is regarded as a category with a single object $*$, the left
 action of $W$ on $M$ can be regarded as a covariant functor
 $\mu_{M}:W\to M$, which assigns $M$ to the unique object $*$ in $W$.
 Then the composition 
 \[
  F_{w} = \mu_{M}\circ w: C \rarrow{} \lMod{k}
 \]
 is a functor given by 
 $F_{w}(x) = M$ on objects and
 \[
  F_{w}(u)(m) = w(u)\cdot m
 \]
 for $u\in C_{1}$ and $m\in M$.

 The homology of $C$ with coefficients in $F_{w}$ is essentially the
 homology defined in Definition
 \ref{def:homology_of_weighted_acyclic_category}.  
\end{example}

\subsection{Homology of Small Categories as a Derived Functor}
\label{functor_homology_derived_functor}

The aim of this section is to prove Theorem \ref{first_homology}.
We first need a description of the homology of small category as a derived
functor. 
In the rest of this section, we free use the language of homological
algebra. 
We also use the following notation which simplifies descriptions of
constructions related to small categories and functors.

\begin{definition}
 Let $C$ be a small category. The $k$-linear category generated by $C$
 is denoted by $kC$ so that $(kC)_{1}$ is the free $k$-module generated
 by $C_{1}$.
 The free $k$-module generated by $C_{0}$ is denoted by $kC_{0}$. We
 regard it as a coalgebra over $k$ under the diagonal on $C_{0}$.
 We regard $(kC)_{1}$ as a right $kC_{0}$-comodule via the source map
 $s$ and a left $kC_{0}$-comodule via the target map $t$.

 For a left $kC_{0}$-comodule $M$, define $kC\Box_{C_{0}}M$ by the
 following equalizer diagram
 \[
  \equalizer{kC\Box_{C_{0}}M}{kC\otimes M}{s\otimes 1}{1\otimes
 \delta_{M}}{kC\otimes kC_{0}\otimes M,} 
 \]
 where $\delta_{M}$ is the comodule structure map of $M$.

 A left $C$-module is a left $kC_{0}$-comdule $M$ equipped with a map
 \[
  \mu_{M} : kC\Box_{C_{0}}M \rarrow{} M
 \]
 satisfying the associativity and unit conditions.
 Right $C$-modules are defined in a similar way by switching $kC$ and $M$.
 The categories of left and right $C$-modules are denoted by $\lMod{C}$
 and $\rMod{C}$, respectively.
\end{definition}

\begin{example}
 \label{functor_as_module}
 Let $G: C\to \lMod{k}$ be a functor. Define a $k$-module $\Gamma(G)$ by
 \[
  \Gamma(G) = \bigoplus_{x\in C_{0}} G(x).
 \]
 We regard $\Gamma(G)$ as a left $C_{0}$-comodule via
 \[
  \delta_{G}(a) = x\otimes a
 \]
 if $a\in G(x)$.
 Then 
 \[
  kC\Box_{C_{0}}\Gamma(G) = \bigoplus_{u\in C_{1}} k\{u\}\otimes G(s(u)).
 \]
 The induced map $G(u): G(s(u))\to G(t(u))$ induces a map
 \[
  k\{u\}\otimes G(s(u)) \rarrow{} G(t(u)),
 \]
 which defines a structure of left $kC$-module on $\Gamma(G)$.

 Similarly, a contravariant functor $G:C^{\op}\to \lMod{k}$ gives rise
 to a right $kC$-module $\Gamma(G)$.
\end{example}

It is well-known that categories $\lMod{C}$ and $\rMod{C}$ are Abelian
categories with enough projectives.
Thus we may define derived functors.
We are interested in the derived functor of the following bifunctor.

\begin{definition}
 Let $C$ be a small category. For a right $C$-module $N$ and a left
 $C$-module $M$, define a $k$-module $N\otimes_{C}M$ by the following
 coequalizer diagram
 \[
  \coequalizer{N\Box_{C_{0}}kC\Box_{C_{0}} M}{\mu_{N}\otimes
 1}{1\otimes\mu_{M}}{N\Box_{C_{0}}M}{N\otimes_{C}M,} 
 \]
 where $\mu_{N}$ and $\mu_{M}$ are module structure maps for $N$ and
 $M$, respectively.
\end{definition}

Let $N=kC_{0}$, regarded as a right $C$-module via the target map
$t:C_{1}\to C_{0}$. 
Then, for a functor $G:C\to\lMod{k}$, we have the following isomorphism
\[
 B_{n}(C;G) \cong kC_{0}\otimes_{C}\underbrace{kC\Box_{C_{0}}\cdots
 \Box{C_{0}}kC}_{n+1}\otimes_{C} \Gamma(G), 
\]
which can be assembled into an isomorphism of chain complexes.
Since the collection
\[
 \left\{\underbrace{kC\Box_{C_{0}}\cdots
 \Box{C_{0}}kC}_{n+1}\otimes_{C} \Gamma(G)\right\}_{n\ge 0}
\]
is a projective resolution of $\Gamma(G)$ in $\lMod{C}$, the general
theory of derived functors implies the following description of homology
of small categories.

\begin{proposition}
 Let $G: C\to \lMod{k}$ be a functor and
 \[
  \cdots \rarrow{} P_{n} \rarrow{d_{n}} P_{n-1} \rarrow{} \cdots
 \rarrow{} P_{1} 
 \rarrow{d_{1}} P_{0} \rarrow{\varepsilon} \Gamma(G) \rarrow{} 0
 \]
 be a projective resolution of $\Gamma(G)$ in $\lMod{C}$.
 Then we have a natural isomorphism
 \[
  H_{n}(kC_{0}\otimes_{C} P_{*}) \cong H_{n}(C;G)
 \]
 for all $n\ge 0$.
\end{proposition}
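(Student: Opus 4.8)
The plan is to recognise $H_n(C;G)$ as the $n$-th left derived functor of the right-exact additive functor $kC_0\otimes_C(-):\lMod{C}\to\lMod{k}$ evaluated at $\Gamma(G)$, i.e.\ as $\operatorname{Tor}_n^{C}(kC_0,\Gamma(G))$, and then to invoke the standard fact that such derived functors may be computed from an arbitrary projective resolution. Two ingredients recorded immediately before the statement carry the weight: the chain-level identification
\[
 B_n(C;G)\cong kC_0\otimes_C\underbrace{kC\Box_{C_0}\cdots\Box_{C_0}kC}_{n+1}\otimes_C\Gamma(G),
\]
and the assertion that, writing $\beta_n=\underbrace{kC\Box_{C_0}\cdots\Box_{C_0}kC}_{n+1}\otimes_C\Gamma(G)$, the augmented complex $\beta_*\to\Gamma(G)$ is a projective resolution in $\lMod{C}$.

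First I would observe that, by definition, $H_n(C;G)$ is the homology of the bar complex $B_*(C;G)$ with boundary $\sum_i(-1)^i d_i$. Since the displayed isomorphism assembles, as noted above, into an isomorphism of chain complexes $B_*(C;G)\cong kC_0\otimes_C\beta_*$, we obtain
\[
 H_n(C;G)\cong H_n\bigl(kC_0\otimes_C\beta_*\bigr).
\]
Because $\beta_*$ is a projective resolution of $\Gamma(G)$, this already identifies $H_n(C;G)$ with $H_n(kC_0\otimes_C P_*)$ for the particular choice $P_*=\beta_*$.

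It then remains to prove independence of the resolution. Given an arbitrary projective resolution $P_*\to\Gamma(G)$, the comparison theorem of homological algebra furnishes chain maps $f:P_*\to\beta_*$ and $g:\beta_*\to P_*$ lifting $\mathrm{id}_{\Gamma(G)}$, with $gf$ and $fg$ chain homotopic to the respective identities; projectivity of the $P_n$ and $\beta_n$ together with exactness of the target resolution is exactly what is needed to build these lifts and homotopies. Applying the additive functor $kC_0\otimes_C(-)$ then sends chain maps to chain maps and chain homotopies to chain homotopies, so it carries this chain homotopy equivalence to one between $kC_0\otimes_C P_*$ and $kC_0\otimes_C\beta_*$. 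Passing to homology and composing with the isomorphism above gives $H_n(kC_0\otimes_C P_*)\cong H_n(C;G)$, and functoriality of the bar construction in $G$ makes the identification natural.

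The only point demanding genuine work, as opposed to formal bookkeeping, is the input that $\beta_*$ is a projective resolution: that each $\beta_n$ is free as a left $kC$-module through its leftmost factor, and that the augmented complex is acyclic. The acyclicity is the usual statement that the two-sided bar construction admits an extra degeneracy giving an explicit $k$-linear contracting homotopy, which forces exactness on underlying modules and hence in $\lMod{C}$. This is the step I expect to be the real obstacle were it not already granted in the preceding discussion; taking it as given, the argument above is entirely formal.
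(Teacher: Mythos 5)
Your proposal is correct and follows essentially the same route as the paper: the paper likewise identifies $H_n(C;G)$ with the homology of $kC_0\otimes_C$ applied to the bar resolution of $\Gamma(G)$ and then appeals to the general theory of derived functors (i.e.\ independence of the choice of projective resolution) to conclude. The comparison-theorem details you spell out are exactly what the paper leaves implicit in that appeal, and like the paper you take as given that the two-sided bar construction is a projective resolution.
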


When $C=F(Q)$ for a finite acyclic quiver $Q$, a very small projective
resolution of left $C$-modules is known.
The following description can be found in a lecture note by
Crawley-Boevey \cite{CrawleyBoevey-quivlecs}.

\begin{proposition}
 \label{quiver_standard_resolution}
 Let $Q$ be a finite acyclic quiver and $G:F(Q)\to \lMod{k}$ be a
 functor. Then the following sequence is exact  
% \[
% 0 \rarrow{} \bigoplus_{u:x\to y \in Q_1}kQ\cdot e_y\otimes_k e_{x}\cdot
% M \rarrow{f} \bigoplus_{x\in Q_0} kQ\cdot e_x\otimes_k e_{x}\cdot M
% \rarrow{g} M \rarrow{} 0, 
% \]
% where
%\begin{align*}
%  g(a\otimes m) & = am, \\
%  f(a\otimes m) & = ae\otimes m - a\otimes em.
%\end{align*}
\begin{equation}
 0 \rarrow{} kF(Q)\Box_{Q_{0}} kQ_{1}\Box_{Q_{0}} \Gamma(G)
 \rarrow{f} kF(Q)\Box_{Q_{0}} \Gamma(G)
 \rarrow{g} \Gamma(G) \rarrow{} 0, 
 \label{equation:minimal_resolution}
\end{equation}
 where $kQ_{1}$ is regarded as a right $kQ_{0}$-comodule via the source
 map and a left $kQ_{0}$-comodule via the target map.
 The maps $f$ and $g$ are defined by
 \begin{align*}
  g(a\otimes m) & = G(a)(m), \\
  f(a\otimes u\otimes m) & = a\otimes G(u)(m) - au\otimes m.
 \end{align*}
\end{proposition}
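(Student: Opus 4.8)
The plan is to prove exactness by making all three terms completely explicit and then exhibiting a splitting governed by path length. By Example \ref{functor_as_module}, the middle term is $B := kF(Q)\Box_{Q_{0}}\Gamma(G)\cong\bigoplus_{a}k\{a\}\otimes G(s(a))$, the sum running over all paths $a$ in $F(Q)$ (including identities), while the left-hand term is $A := kF(Q)\Box_{Q_{0}}kQ_{1}\Box_{Q_{0}}\Gamma(G)\cong\bigoplus_{(a,u)}k\{a\}\otimes k\{u\}\otimes G(s(u))$, the sum running over pairs consisting of a path $a$ and an arrow $u$ with $t(u)=s(a)$. First I would record the two cheap facts: $g$ is surjective, since $g(1_{x}\otimes n)=G(1_{x})(n)=n$ hits every summand $G(x)$ of $\Gamma(G)$ through the identity paths; and $g\circ f=0$, which is immediate from functoriality, since $g(f(a\otimes u\otimes m))=G(a)(G(u)(m))-G(au)(m)=0$ because $G(au)=G(a)\circ G(u)$. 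Thus $\Ima f\subseteq\Ker g$, and it remains to prove that $f$ is injective and that $\Ker g\subseteq\Ima f$.

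The combinatorial heart of the argument is the length filtration. Because $Q$ is finite and acyclic, no path can repeat a vertex, so every path has length at most $|Q_{0}|-1$; grading $B$ by the length $\ell(a)$ of the indexing path therefore gives a \emph{finite} decomposition $B=\bigoplus_{\ell\ge 0}B_{\ell}$, with $B_{0}=\bigoplus_{x}k\{1_{x}\}\otimes G(x)$ the span of the identity paths and $B_{\ge 1}:=\bigoplus_{\ell\ge 1}B_{\ell}$. The key observation is that every path $p$ of length $\ge 1$ factors uniquely as $p=au$, with $u$ its initial arrow (the arrow with $s(u)=s(p)$) and $a$ the remaining path; hence $(a,u)\mapsto au$ is a bijection between the index set of $A$, graded by $\ell(a)+1$, and the set of paths of length $\ge 1$ indexing $B_{\ge 1}$.

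With this in hand I would analyze $\bar{f}:=\pi\circ f\colon A\rarrow{}B_{\ge 1}$, where $\pi\colon B\rarrow{}B_{\ge 1}$ kills the identity-path summand $B_{0}$. Writing $f(a\otimes u\otimes m)=a\otimes G(u)(m)-au\otimes m$, the second term always lies in $B_{\ell(a)+1}$ and, by the unique-factorization bijection, $a\otimes u\otimes m\mapsto -au\otimes m$ is an isomorphism from the degree-$(\ell(a)+1)$ part of $A$ onto $B_{\ell(a)+1}$; the first term $a\otimes G(u)(m)$ lies in the strictly lower degree $\ell(a)$ (and is annihilated by $\pi$ when $a$ is an identity). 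So $\bar{f}$ preserves the length filtration and its associated graded map is precisely the diagonal isomorphism; since the filtration is finite and exhaustive, $\bar{f}$ is itself an isomorphism. From this, exactness follows formally: $\bar{f}$ injective forces $f$ injective, and $\pi$ restricts to an isomorphism $\Ima f\rarrow{\sim}B_{\ge 1}$ with $\Ima f\cap B_{0}=0$, giving $B=\Ima f\oplus B_{0}$; since $g$ restricts to an isomorphism $B_{0}\rarrow{\sim}\Gamma(G)$ and vanishes on $\Ima f$, any $b\in\Ker g$ decomposes as $b=y+b_{0}$ with $y\in\Ima f$ and $b_{0}\in B_{0}$, forcing $g(b_{0})=0$, hence $b_{0}=0$ and $b\in\Ima f$.

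The main obstacle I anticipate is making the triangularity step airtight: verifying that the off-diagonal term $a\otimes G(u)(m)$ genuinely lowers filtration degree and never interferes with invertibility on the associated graded, and confirming that finiteness together with acyclicity really does bound path length so the grading is finite and the inductive inversion of $\bar{f}$ terminates. Acyclicity is essential here — without it $F(Q)$ carries arbitrarily long paths (indeed nontrivial endomorphisms), the length grading becomes infinite, and the clean splitting $B=\Ima f\oplus B_{0}$ can fail. A secondary point requiring care is that $f$ and $g$ are genuinely defined on the cotensor (equalizer) subspaces rather than on the full tensor products, i.e.\ that they respect the object-matching conditions $s(a)=t(u)$ and $s(u)=x$ defining the $\Box_{Q_{0}}$'s; this is routine but should be checked explicitly.
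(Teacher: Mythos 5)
Your proof is correct, but it cannot be compared step-by-step with the paper's, because the paper does not prove Proposition \ref{quiver_standard_resolution} at all: the statement is quoted from Crawley-Boevey's lecture notes and then used as a black box to deduce Theorem \ref{first_homology}. What you have written is, in effect, a self-contained reconstruction of the classical proof of the standard resolution of a path algebra, transplanted into the cotensor-product language of the appendix. Your explicit decompositions of the three terms agree with Example \ref{functor_as_module}; the observations that $g$ is split surjective via the identity paths and that $g\circ f=0$ by functoriality are the right cheap steps; and the core of your argument --- the unique factorization $p=au$ of a positive-length path by its initial arrow, which makes $\pi\circ f$ a filtration-preserving map whose associated graded is (up to sign) the isomorphism induced by that bijection, hence itself an isomorphism onto $B_{\ge 1}$ --- is exactly the mechanism that makes the standard resolution exact. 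The formal consequences you draw ($f$ injective, $B=\Ima f\oplus B_{0}$, $\Ker g=\Ima f$) are all valid, and your flagged ``routine'' check that $f$ and $g$ respect the equalizer conditions is indeed routine. This buys the paper something it currently lacks: a proof within its own formalism, rather than a pointer to the representation-theory literature where the result is phrased for modules over the path algebra $kQ$ rather than for functors on $F(Q)$ and comodules over $kQ_{0}$.

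One correction to your closing remark: acyclicity is not in fact essential for exactness of (\ref{equation:minimal_resolution}), and the splitting $B=\Ima f\oplus B_{0}$ does not fail for quivers with cycles. The path-length grading is bounded below and every element of $B$ is a finite sum of basis elements, so your triangularity argument survives unboundedness: injectivity of $\pi\circ f$ follows by inspecting the top-degree component of a putative kernel element, and surjectivity by induction on degree from below; equivalently, $\pi\circ f=D(\mathrm{id}+D^{-1}L)$ with $D$ the diagonal isomorphism and $D^{-1}L$ locally nilpotent because it strictly lowers a grading bounded below. For the one-loop quiver, for instance, the sequence becomes the familiar exact resolution $0\rarrow{} k[x]\otimes M\rarrow{} k[x]\otimes M\rarrow{} M\rarrow{} 0$, witnessing that $k[x]$ is hereditary. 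Finiteness and acyclicity of $Q$ buy you only the finiteness of the filtration, which lets you skip the induction; they matter elsewhere in the paper (in the algorithm and in the use of Theorem \ref{first_homology}), but your proof of this proposition would be strengthened, not broken, by dropping that crutch.
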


The above sequence is called the \emph{standard resolution} or the
\emph{minimal resolution} of $G$ over the free category $F(Q)$.

Theorem \ref{first_homology} is now a corollary to Proposition
\ref{quiver_standard_resolution}. 

\begin{proof}[Proof of Theorem \ref{first_homology}]
 Since (\ref{equation:minimal_resolution}) is a projective resolution,
 $H_{*}(F(Q);G)$ can be computed by using this resolution for any
 functor $G:F(Q)\to\lMod{k}$. 
 In the case of Theorem \ref{first_homology}, the functor is given by
 \[
  G(x) = M
 \]
 for any $x\in Q_{0}$. Thus
 \[
  \Gamma(G) = \bigoplus_{x\in Q_{0}} k\{x\}\otimes M.
 \]

 For a left $kQ_{0}$-comodule $N$, we have a natural isomorphism
 \[
  kQ_{0}\otimes_{F(Q)}(kF(Q)\Box_{Q_{0}} N) \cong N
 \]
 induced by the target map $t:F(Q)\to Q_{0}$.
 In particular, $H_{*}(Q,w;M)$ is the homology of the complex
 \[
 \cdots \rarrow{} 0 \rarrow{} kQ_{1}\Box_{Q_{0}} \Gamma(G) \rarrow{\bar{f}}
 \bigoplus_{x\in Q_{0}} k\{x\}\otimes M
 \]
 and we have $H_{n}(Q,w;M) = 0$ for $n\ge 2$.
 And the induced map $\bar{f}$ is given by
 \[
  \bar{f}(u\otimes m) = t(u)\otimes (w(u)\cdot m) - s(u)\otimes m.
 \]
 This completes the proof of Theorem \ref{first_homology}.
\end{proof}

\bibliographystyle{abbrv}
\bibliography{functor_homology} 

\begin{thebibliography}{10}

\bibitem{Babai-Kucera1979}
L.~Babai and L.~Ku{\v{c}}era.
\newblock Canonical labelling of graphs in linear average time.
\newblock In {\em 20th Annual Symposium on Foundations of Computer Science},
  pages 39--46, San Juan, Puerto Rico, 29--31 Oct. 1979. IEEE.

\bibitem{Berger-Shor1990}
B.~Berger and P.~W. Shor.
\newblock Approximation algorithms for the maximum acyclic subgraph problem.
\newblock pages 236--243, San Francisco, CA, USA, Jan. 1990. SIAM.

\bibitem{1808.02590}
H.~Chen, B.~Perozzi, R.~Al-Rfou, and S.~Skiena.
\newblock {A Tutorial on Network Embeddings}.

\bibitem{CrawleyBoevey-quivlecs}
W.~Crawley-Boevey.
\newblock Lectures on representations of quivers.

\bibitem{Fortunato2010}
S.~Fortunato.
\newblock Community detection in graphs.
\newblock {\em Phys. Rep.}, 486(3-5):75--174, 2010.

\bibitem{Gaertner-Flach-Wrobel2003}
T.~G{\"a}rtner, P.~Flach, and S.~Wrobel.
\newblock On graph kernels: Hardness results and efficient alternatives.
\newblock In {\em Proceedings of the 16th Annual Conference on Computational
  Learning Theory and 7th Kernel Workshop}, pages 129--143. Springer-Verlag,
  Aug. 2003.

\bibitem{Golub-VanLoan2013}
G.~H. Golub and C.~F. Van~Loan.
\newblock {\em Matrix computations}.
\newblock Johns Hopkins Studies in the Mathematical Sciences. Johns Hopkins
  University Press, Baltimore, MD, fourth edition, 2013.

\bibitem{Husainov2002}
A.~A. Husainov.
\newblock Homological dimension theory of small categories.
\newblock volume 110, pages 2273--2321. 2002.
\newblock Algebra, 17.

\bibitem{1802.03546}
R.~Kanda.
\newblock Construction of {G}rothendieck categories with enough compressible
  objects using colored quivers.
\newblock {\em J. Pure Appl. Algebra}, 224(1):53--65, 2020.

\bibitem{Kashima-Tsuda-Inokuchi2003}
H.~Kashima, K.~Tsuda, and A.~Inokuchi.
\newblock Marginalized kernels between labeled graphs.
\newblock In T.~Fawcett and N.~Mishra, editors, {\em Machine Learning,
  Proceedings of the Twentieth International Conference (ICML 2003), August
  21-24, 2003, Washington, DC, USA}, pages 321--328. AAAI Press, 2003.

\bibitem{1903.11835}
N.~M. Kriege, F.~D. Johansson, and C.~Morris.
\newblock A survey on graph kernels.
\newblock {\em Applied Network Science}, 5(1):6, 2020.

\bibitem{McAuley-Leskovec2012}
J.~J. McAuley and J.~Leskovec.
\newblock Learning to discover social circles in ego networks.
\newblock In P.~L. Bartlett, F.~C.~N. Pereira, C.~J.~C. Burges, L.~Bottou, and
  K.~Q. Weinberger, editors, {\em Advances in Neural Information Processing
  Systems 25: 26th Annual Conference on Neural Information Processing Systems
  2012. Proceedings of a meeting held December 3-6, 2012, Lake Tahoe, Nevada,
  United States}, pages 548--556, 2012.

\bibitem{Sen-Namata-Bilgic-Getoor-Gallagher-EliassiRad2008}
P.~Sen, G.~M. Namata, M.~Bilgic, L.~Getoor, B.~Gallagher, and T.~Eliassi-Rad.
\newblock Collective classification in network data.
\newblock {\em AI Magazine}, 29(3):93--106, 2008.

\bibitem{Shervashidze-Schweitzer-vanLeeuwen-Mehlhorn-Borgwardt2011}
N.~Shervashidze, P.~Schweitzer, E.~J. van Leeuwen, K.~Mehlhorn, and K.~M.
  Borgwardt.
\newblock Weisfeiler-{L}ehman graph kernels.
\newblock {\em J. Mach. Learn. Res.}, 12:2539--2561, 2011.

\bibitem{vanderMaaten-Hinton2008}
L.~van~der Maaten and G.~Hinton.
\newblock Visualizing data using t-sne.
\newblock {\em Journal of Machine Learning Research}, 9:2579--2605, Nov. 2008.

\end{thebibliography}

\end{document}